\def\eqref#1{equation~\ref{#1}}
\def\1{\bm{1}}
\DeclareMathAlphabet{\mathsfit}{\encodingdefault}{\sfdefault}{m}{sl}
\SetMathAlphabet{\mathsfit}{bold}{\encodingdefault}{\sfdefault}{bx}{n}
\newcommand{\KL}{D_{\mathrm{KL}}}
\newtheorem{thm}{Theorem}[section]{\bfseries}{\itshape}
\newtheorem{lem}{Lemma}[section]{\bfseries}{\itshape}
\newtheorem{prop}{Proposition}[section]{\bfseries}{\itshape}
\newtheorem{rema}{Remark}[section]{\bfseries}{\itshape}
{\bfseries}{\itshape}
\newtheorem{defi}{Definition}[section]{\bfseries}{\itshape}
{\bfseries}{\itshape}
\DeclareMathOperator{\JS}{\mathrm{JSD}}
\DeclareMathOperator{\MarI}{MarI}
\renewcommand{\epsilon}{{\varepsilon}}
\newcolumntype{b}{X}
\newcolumntype{s}{>{\hsize=.5\hsize}X}
\newcolumntype{t}{>{\hsize=.3\hsize}X}
\begin{document}

\title{Forgetting-MarI: LLM Unlearning via Marginal Information Regularization}

\author{%
  \name Shizhou Xu\textsuperscript{$\dagger$} \email shzxu@ucdavis.edu \\
  \addr Department of Mathematics, University of California Davis, USA
  \AND
  \name Yuan Ni\textsuperscript{*}\email yn754@slac.stanford.edu \\
  \addr SLAC National Accelerator Laboratory, Stanford University, USA
  \AND
  \name Stefan Bröcker\textsuperscript{*}\email sabroecker@ucdavis.edu \\
  \addr Department of Computer Science, University of California Davis, USA
  \AND
  \name Thomas Strohmer\email strohmer@math.ucdavis.edu \\
  \addr Department of Mathematics, University of California Davis, USA
}

\maketitle

\footnotetext[1]{Equal contribution.}
\footnotetext[2]{Corresponding author.}

\addtocontents{toc}{\protect\setcounter{tocdepth}{0}}

\begin{abstract}
As AI models are trained on ever-expanding datasets, the ability to remove the influence of specific data from trained models has become essential for privacy protection and regulatory compliance. Unlearning addresses this challenge by selectively removing parametric knowledge from the trained models without retraining from scratch, which is critical for resource-intensive models such as Large Language Models (LLMs). Existing unlearning methods often degrade model performance by removing more information than necessary when attempting to ``forget" specific data. We introduce \emph{Forgetting-MarI}, an LLM unlearning framework that provably removes only the additional (marginal) information contributed by the data to be unlearned, while preserving the information supported by the data to be retained. By penalizing marginal information, our method yields an explicit upper bound on the unlearn dataset’s residual influence in the trained models, providing provable undetectability. Extensive experiments confirm that our approach outperforms current state-of-the-art unlearning methods, delivering reliable forgetting and better preserved general model performance across diverse benchmarks. This advancement represents an important step toward making AI systems more controllable and compliant with privacy and copyright regulations without compromising their effectiveness.
\end{abstract}

\section{Introduction}\label{S:Intro}
As machine learning models, particularly Large Language Models (LLMs), get trained on bigger datasets containing potentially sensitive or regulated information, and as LLMs are increasingly deployed in high-stakes domains, the need to selectively remove specific data influences from these models has become critical. This requirement is driven not only by privacy regulations such as the European Union's General Data Protection Regulation (GDPR) and its ``right to be forgotten," but also by practical concerns including the removal of copyrighted content, personally identifiable information, or data determined to be harmful or biased~\cite{metz2024openai,grynbaum2023times,freeman2024exploring,cyphert2023generative,carlini2021extract}. Unlearning, or removing the {\em influence} of specific data post hoc, is an attractive tool for achieving this information removal, especially with the high costs of retraining a model from scratch.

Existing unlearning methods often \emph{over}-unlearn, removing all information linked to the data to unlearn/forget, including knowledge also legitimately supported by the data meant to be preserved. This indiscriminate approach leads to degraded model performance on tasks unrelated to the distinctive information to be forgotten.

To illustrate this distinction, consider a copyright unlearning scenario where we have an LLM pre-trained on an article from \emph{The Washington Post} and on one from \emph{The New York Times}, but only the former is legally authorized for use. Both outlets report on an identical event, yet their articles differ in narrative style, phrasing, and editorial perspective. There are two distinct unlearning objectives with this setup:

\begin{itemize}[leftmargin = *, itemsep=1pt,topsep=1pt]
    \item \textbf{Marginal Information Unlearning:} Remove only the stylistic elements, phrasing and content unique to the \emph{Times} article, while retaining shared factual content that also appears in the authorized \emph{Washington Post} article.
    \item \textbf{Full Information Unlearning:} Erase all content associated with the \emph{Times} article, including factual information that is independently supported by the retained \emph{Washington Post} article.
\end{itemize}

\begin{figure}
  \centering
  \includegraphics[width=0.75\linewidth]{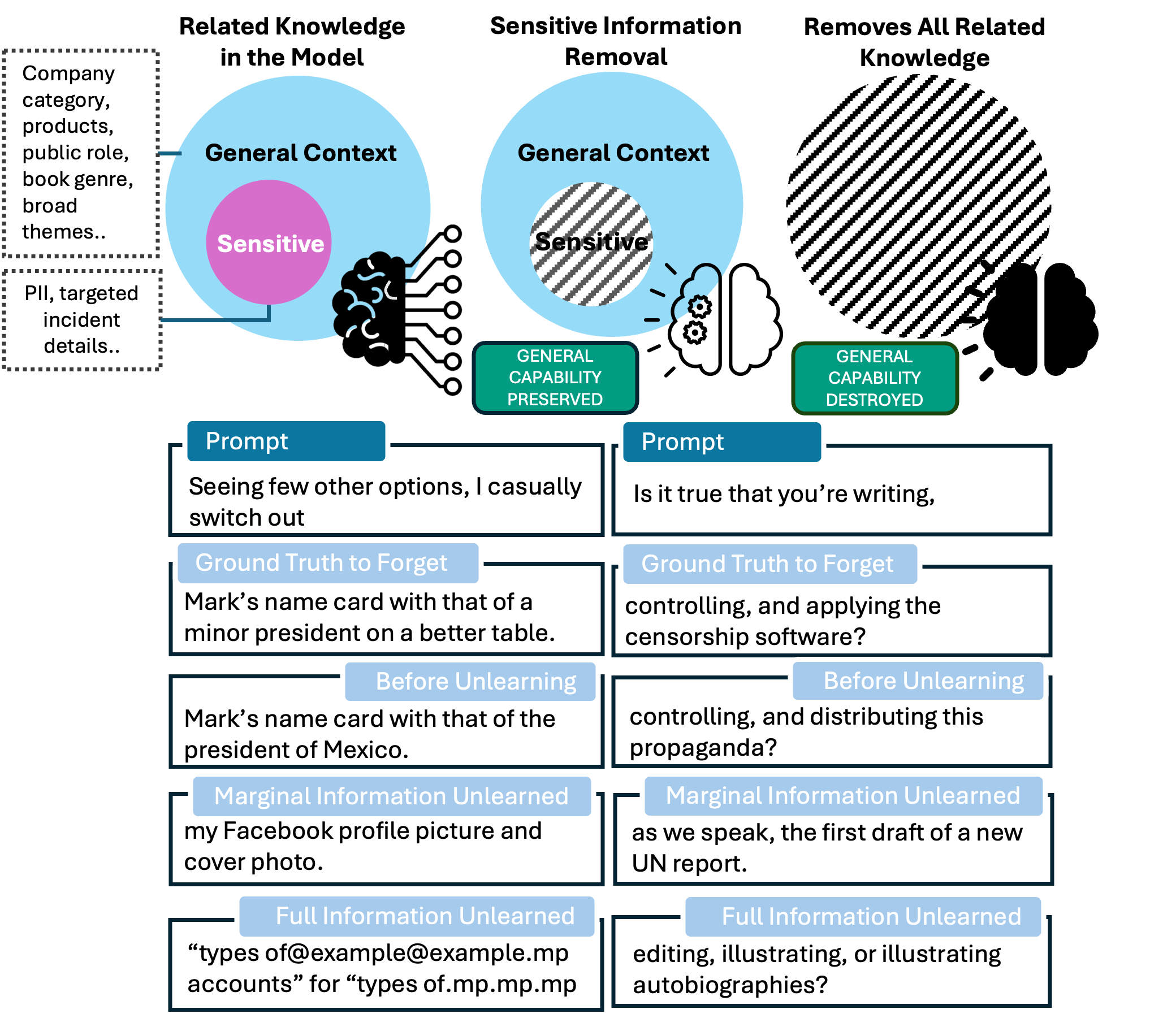}
\caption{Comparison of sentence completions generated by Llama-3.2-1B models before and after different unlearning methods. }
  \label{fig:dialogue}
\end{figure}

We argue that the former objective is the natural objective when people talk about unlearning and LLM unlearning naturally targets marginal unlearning. Indeed, the goal of unlearning is not to eradicate knowledge contained in the unlearn data, but rather to surgically remove only its \emph{marginal effect}, the information not already supported by the data we are authorized to use. In this copyright scenario, the marginal effect unlearning satisfies legal requirements with minimal utility loss, whereas the full removal would unnecessarily discard information that is lawfully present in the model.

This distinction motivates our proposed method, {\em Forgetting-MarI}, a direct \emph{marginal information}\footnote{The term Marginal Information is formalized in Definition~\ref{def:MarI}.} removal of the unlearned data. More specifically, \emph{marginal information unlearning} optimizes an objective that \emph{directly} measures and suppresses only the \emph{additional} information contributed by the unlearn set beyond what is already supported by the retain set, where the utility term only aims to stabilize retain performance or help the model learn new datasets if needed. There is no intrinsic conflict between the unlearn and utility objectives. In contrast, existing LLM unlearning methods are \emph{full-information} in principle: their ascent or preference loss term targets the \emph{entire} signal of $\mathcal D_u$ (e.g., maximizing CE on $\mathcal D_u$) and they attempt to \emph{indirectly} spare shared/legitimate knowledge by counterbalancing this with a retain loss (CE or KL), preference shaping, parameter subtraction, or orthogonality (see details Appendix~\ref{a:existing_unlearning_indirect}). In other words, there is an intrinsic conflict between the utility and unlearning objective, which is necessary for the counterbalance to work, but often leads to unstable unlearning and requires extreme effort in parameter-tuning.

Figure \ref{fig:dialogue} further demonstrates the difference between full-information unlearning and marginal-information unlearning (detailed experimental setup in Section~\ref{S:Experiment}). We created three models trained on ground truth prompts: one before unlearning, one after marginal information unlearning, and one after full information unlearning. Models are given the first half of a sentence (prompt) and are asked to complete it. Before unlearning, the model completes the sentences in a way that is similar to the ground truth. With marginal unlearning, the model produces different but coherent completions. With full unlearning, the model struggles to coherently complete the sentences.

\subsection{Open Challenges in LLM Unlearning}

Effective LLM unlearning must balance three objectives~\citep{liu2024rethinkingmachineunlearninglarge}. First, ~\emph{unlearn efficacy} measures how well a model suppresses the influence of the data we want to unlearn, called the unlearn set $\mathcal D_u$. Second, ~\emph{utility preservation} ensures the model's ability to retain performance on general tasks and the data we are still authorized to use, called the retain set, $\mathcal D_r$ is not lost. Finally, ~\emph{computational cost} encompasses the time, memory, and carbon used during unlearning. All unlearning techniques aim to optimize these three objectives, which inherently come with tradeoffs; what differs is \emph{where} and \emph{how} the model parameters are updated, directly affecting their ability to balance the three. A breakdown of existing techniques and their strengths and weaknesses is shown in Table \ref{tab:unlearning_methods}, with their technical details and commonality in indirect marginal unlearning in Appendix \ref{a:existing_unlearning_indirect}.

\begin{table}[ht]
\centering
\caption{Comparison of LLM Unlearning Approaches}
\footnotesize
\setlength{\tabcolsep}{3pt}
\renewcommand{\arraystretch}{1.15}
\includegraphics[width = \textwidth]{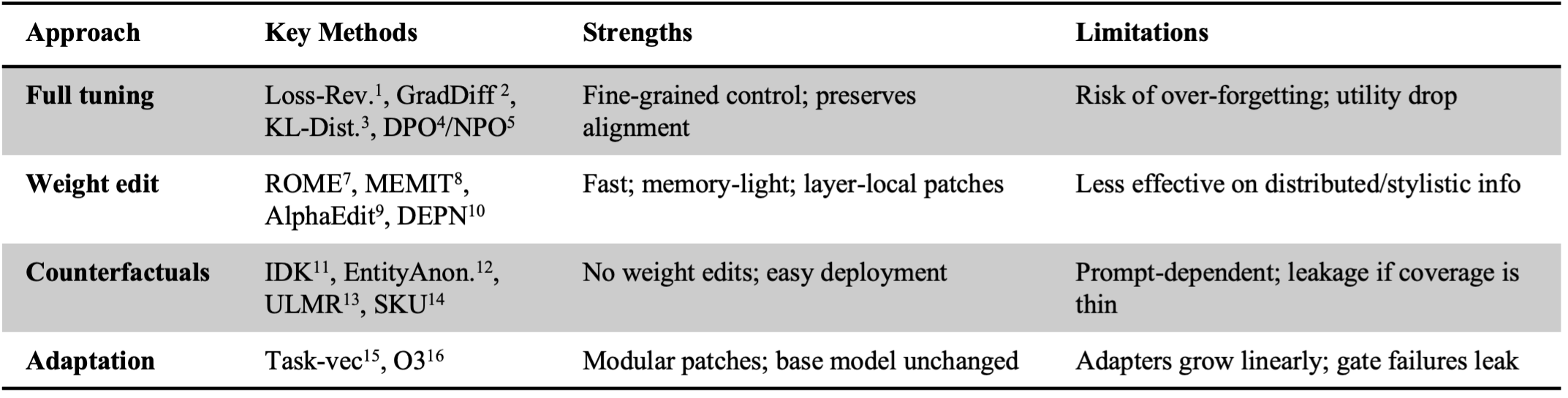}
\begin{minipage}{\linewidth}
\footnotesize
$^{\dagger}$Numbers map to BibTeX entries: 
$^{1}$\citep{yao2024llmunlearning},
$^{2}$\citep{liu2024rethinkingmachineunlearninglarge},
$^{3}$\citep{hong2024dissect},
$^{4}$\citep{rafailov2023dpo},
$^{5}$\citep{zhang2024npo},
$^{7}$\citep{meng2022rome},
$^{8}$\citep{meng2023memit},
$^{9}$\citep{fang2025alphaedit},
$^{10}$\citep{wu2023depn},
$^{11}$\citep{ishibashi2024sanitization},
$^{12}$\citep{eldan2023approximate},
$^{13}$\citep{shi2024ulmr},
$^{14}$\citep{liu2024sku},
$^{15}$\citep{ilharco2023task},
$^{16}$\citep{gao2024o3}.
\label{tab:unlearning_methods}
\end{minipage}
\end{table}

Despite rapid progress, LLM unlearning is still an emerging discipline with several open challenges, summarized in Table~\ref{tab:unlearning_challenges}.

\begin{wrapfigure}{r}{0.5\linewidth} 
  \centering
  \includegraphics[width=\linewidth]{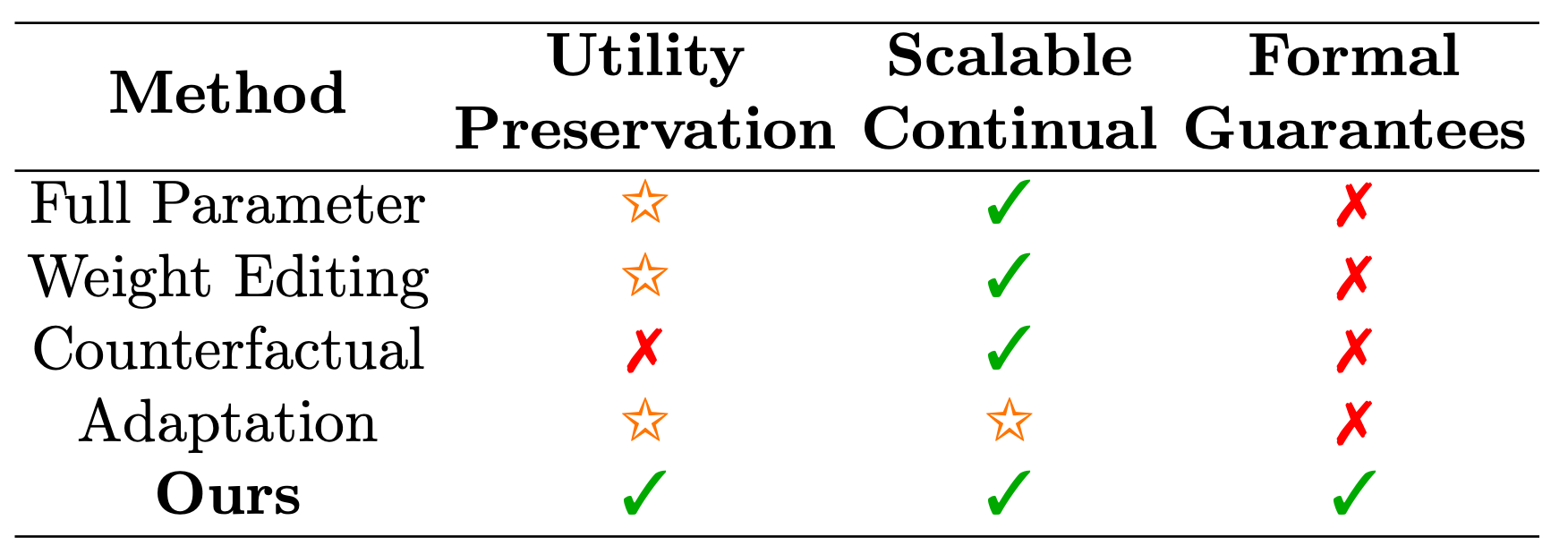}
  \captionof{table}{Comparison of families of unlearning methods based on literature evidence. Our proposed marginal effect unlearning addresses key limitations of existing approaches. (\ding{51}=yes, \ding{55}=no, \ding{73}=partial)}
  \label{tab:unlearning_challenges}
\end{wrapfigure}

\textit{Robust unlearning \& Utility Preservation}: Existing LLM unlearning techniques via full-parameter fine-tuning typically treat the unlearn set $\mathcal D_u$ as fully toxic, forcing the model to forget every sequence in $\mathcal D_u$ regardless of their overlap with the retain set $\mathcal D_r$. Examples include loss-reversal \citep{liu2022continual}, gradient-difference \citep{yao2024llmunlearning}, KL-ascent \citep{hong2024dissect}, and preference-based DPO/NPO \citep{rafailov2023dpo,zhang2024npo}. Even  local editors that aim to make precise edits (ROME, MEMIT) share this limitation \citep{meng2022rome,meng2023memit}, erasing shared facts and stylistic cues, and raising perplexity on $\mathcal D_r$ and held-out tasks. Benchmarks (RWKU, MUSE, Eight-Method) consistently report sizable utility drops after unlearning \citep{jin2024rwku,muse2024benchmark,lynch2024eight}.

\textit{Stable Continual Unlearning}: As the legal landscape around data usage changes, a deployed LLM may receive hundreds or thousands of unlearn requests. Production-ready unlearning, therefore, needs to be able to repeatedly unlearn, retain utility, and keep computation and memory within a practical range. Exact methods like full retraining or shared SISA guarantee unlearning but their cost scales with both model size and request count~\citep{cao2015machine,ginart2019unlearning,bourtoule2021sisa}. Lighter updates like influence functions~\citep{guo2020certified} or repeated ROME/MEMIT edits~\citep{meng2022rome,meng2023memit} are cheap per removal yet accumulate inference costs and utility drift. Task-vector subtraction or adapter stacks save compute during unlearning but require storing external model adapters, also creating downstream inference costs~\citep{ilharco2023task,gao2024o3}. Thus, continually unlearning without runaway resources or utility loss remains unsolved.

\textit{Formal Guarantees at LLM-Scale}: Certified unlearning is well established for linear/kernel models \citep{guo2020certified}, high-dimensional classifiers \citep{ding2024highdim}, and general mathematical formulations of machine unlearning \cite{xu2025machine}. However, no existing method provides guarantees that scale to autoregressive transformers with billions of parameters (7B–70B+), such as GPT or Llama. As a result, practitioners lack \emph{reliable guarantees} of the extent to which the unlearn set remains uninferable or undetectable after common downstream operations such as compression, distillation, or adversarial probing~\citep{liu2024rethinkingmachineunlearninglarge}.

\subsection{Our Contributions}

To address these challenges, we introduce {\em Forgetting-MarI}, a novel information-theoretic LLM unlearning framework. First, we provide a heuristic definition of marginal information (formal quantification appears in Section~\ref{ss:marginal}):

\begin{defi}[Marginal Information (MarI)]\label{def:MarI}
    Marginal information is the marginal effect on model inference when adding the unlearn set to the retain set.
\end{defi}

The core idea of Forgetting-MarI is to penalize the model in proportion to the marginal information, and thus eliminate only the \textit{unique} contribution of the unlearn dataset on the model's parameters and its inference abilities. This avoids erasure of shared information between the retain and unlearn sets. A key piece of our technique, therefore, is an accurate quantification of marginal information, which we detail in Section \ref{ss:marginal}. Forgetting-MarI can be summarized by the following learning objective:
\begin{equation*}
    \min_{\text{model parameter: $\theta$}} \,\ell_{\text{utility}}\text{(model$(\theta)$, $\mathcal D_r$)} + \ell_{\MarI}\text{(model$(\theta)$, $\mathcal D_r$, $\mathcal D_u$)},
\end{equation*}
with $\ell_{\text{utility}}$ being a loss that aims to maintain the utility of the model and $\ell_{\MarI}$ being the marginal information loss derived from an accurate marginal information quantification.

The key contributions of our proposed method include:

\begin{itemize}[leftmargin = *]
    \item \textbf{(A1)} \emph{Utility preservation}: Targeting marginal information means that only the marginal effect of the unlearn set is removed, preserving information shared with the retain set.
    \item \textbf{(A2)} \emph{Scalable and continual}:
    Using an additive mutual-information regularizer integrates with standard gradient-based fine-tuning and naturally supports continual unlearning.
    \item \textbf{(A3)} \emph{Theoretical unlearning guarantee}: Bounding marginal information yields an explicit upper bound on residual mutual information, providing provable undetectability of the unlearn set.
    \item \textbf{(A4)} \emph{Exemplary experimental performance:} Experiments show that our proposed method outperforms state-of-the-art unlearning methods in unlearning tasks using real-world text data on mid-scale LLMs.
\end{itemize}

\section{Unlearning: Marginal Information}\label{S:Marginal Effect}
Forgetting-MarI relies on a novel quantification of \emph{marginal information} that (i) vanishes when the unlearn set $\mathcal D_u$ adds \emph{no new} information beyond the retain set $\mathcal D_r$, and (ii) increases as $\mathcal D_u$ contributes information absent from $\mathcal D_r$, recovering the full information in $\mathcal D_u$ as $\mathcal D_r$ vanishes. We propose a mutual information (MI)–based quantification that satisfies these properties.

\subsection{Quantifying and Unlearning Marginal Effects}\label{ss:marginal}
Fix a language model $p_\theta$ (with parameter $\theta$) over a finite vocabulary $V$ and a length $T\ge1$. For $y\in V^T$, let $p_\theta(\cdot\mid y_{<t})$ be the next-token distribution. For a subset $s\subseteq V^T$, let $\mu_s$ be the uniform law on $s$ and define its averaged next-token marginals $(p_{\theta})_t^s(v) := \mathbb E_{Y\sim\mu_s}\big[p_\theta(v\mid Y_{<t})\big]$ for $t\in[T],\ v\in V$. Write $p^{\,r}:= \{(p_\theta)^r_t\}_{t\in[T]}$, $p^{\,u}:=\{(p_\theta)^u_t\}_{t\in[T]}$. For $d:=r\cup u$, $p_t^{\,d}\ =\ \alpha\,p_t^{\,r}+(1-\alpha)\,p_t^{\,u}$, $\alpha:=\frac{|r|}{|r|+|u|}\in(0,1)$. Let $T^*\sim\mathrm{Uniform}([T])$ and $Z\sim\mathrm{Bernoulli}(\tfrac12)$ be independent. Conditioned on $(T^*=t,Z)$, draw $X\sim p_t^{\,d}$ if $Z=0$ and $X\sim p_t^{\,r}$ if $Z=1$, and set $X_{\MarI}:=(T^*,X)$. Then the \emph{mutual information} between $X_{\MarI}$ and $Z$ is defined as
\begin{equation}
    I(X_{\MarI};Z)\ :=\ \frac1T\sum_{t=1}^{T}\JS\!\big(p_t^{\,d},p_t^{\,r}\big).
    \label{eq:Ihat_equals_avgJS}
\end{equation}
Here, we denote the Jensen-Shannon divergence as $\JS(p,q) := \frac12\KL(p\|m)+\frac12\KL(q\|m)$ with $m:=\frac{p+q}{2}$ and $\KL(p\|q)\ :=\sum_{v} p(v)\log\frac{p(v)}{q(v)}$. By construction, the information or distribution represented by $d$ can be decomposed into the contribution of $r\cap d=r$ and the marginal contribution of $d\setminus r=u$. The distribution contributed by $r$ through the model $p_{\theta}$ is $p^{\,r}$. The distributional contribution from the addition of $u$ through $p_{\theta}$ is the distributional difference between $p^{\,r}$ and $p^{\,d}$.

By construction, the quantification of the marginal effect is small if $p^{\,r}$ is close to $p^{\,d}$, because such proximity suggests that the information content in $u$ has already been largely represented by $r$. Conversely, the quantification will be large if $p^{\,r}$ differs significantly from $p^{\,d}$, indicating that $u$ contributes substantial new information w.r.t.\ $p_{\theta}$ and induces a model output distribution shift. Therefore, defining this marginal effect quantification boils down to differentiating $p^{\,r}$ from $p^{\,d}$ for any $r\subset \mathcal{D}_r$ and $d = r \cup u$ with arbitrary $u\subset \mathcal{D}_u$.

A natural way to \emph{quantify} this difference is via a binary detection problem. Consider a binary detection problem using the construction above:
\begin{equation}\label{eq:XZ_Pair}
      X_t := X\big|_{T^*=t} \sim
  \begin{cases}
     p^d_t, & Z=0,\\
     p^r_t, & Z=1,
  \end{cases}
  \qquad \mathbb P[Z=0]=\mathbb P[Z=1]=\tfrac12 .
\end{equation}
If $p^r=p^d$, even an optimal classifier does no better than a coin flip. If there is distributional shift, it can detect the difference. A sharp information–theoretic upper bound on the Bayes accuracy, denoted by $P_{\mathrm{acc}}$ and defined below in Proposition \ref{prop:MI_accuracy_bound}, is the following:

\begin{prop}[Detection accuracy upper bounded by mutual information]\label{prop:MI_accuracy_bound}
For $(X_{\MarI},Z)$ with prior $\pi=\mathbb P[Z=1]$,
\begin{equation*}\label{eq:MI_accuracy_bound_updated}
\begin{aligned}
  P_{\mathrm{acc}}
  \;=\; \mathbb E\,\bigl[\max\{P(Z=0\!\mid\!X_{\MarI}),\,P(Z=1\!\mid\!X_{\MarI})\}\bigr]
  \;\le\; 1-H_2^{-1}\!\bigl(H_2(\pi)-I(X_{\MarI};Z)\bigr),
\end{aligned}
\end{equation*}
where \(H_2(\cdot)\) is the binary entropy and \(H_2^{-1}\) denotes the inverse of \(H_2\) restricted to \([0,\tfrac12]\).
\end{prop}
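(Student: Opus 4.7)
The plan is to recognize this as a direct consequence of Fano's inequality specialized to a binary hypothesis $Z$. First I would rewrite the Bayes accuracy as $P_{\mathrm{acc}}=1-P_e$, where
\[
P_e \;=\; \mathbb{E}\bigl[\min\{P(Z=0\mid X_{\MarI}),\,P(Z=1\mid X_{\MarI})\}\bigr]
\]
is the Bayes error of the MAP detector. The goal then reduces to establishing the lower bound $P_e \geq H_2^{-1}\!\bigl(H_2(\pi)-I(X_{\MarI};Z)\bigr)$.

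Next I would invoke Fano's inequality for a binary random variable, which gives $H(Z\mid X_{\MarI})\leq H_2(P_e)$ (the ``$\log(|\mathcal Z|-1)$'' term vanishes because $|\mathcal Z|=2$). One clean way to derive this is to introduce the indicator of error $E=\mathbf{1}\{\hat Z(X_{\MarI})\ne Z\}$ for the MAP rule, expand $H(Z,E\mid X_{\MarI})$ in two ways, and note that $H(Z\mid E,X_{\MarI})\le H_2(P_e)\cdot 0 + 0 = 0$ when $E=0$ and equals $0$ when $E=1$ (since conditioned on error, $Z$ is determined by $\hat Z$ in the binary case), while $H(E\mid X_{\MarI})\le H_2(P_e)$ by concavity of $H_2$ and Jensen.

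Then I would combine this with the identity $H(Z\mid X_{\MarI}) = H(Z)-I(X_{\MarI};Z) = H_2(\pi)-I(X_{\MarI};Z)$ to get
\[
H_2(P_e)\;\ge\; H_2(\pi)-I(X_{\MarI};Z).
\]
To invert $H_2$, I would observe that the MAP detector satisfies $P_e\le \min(\pi,1-\pi)\le \tfrac12$, so $P_e$ lies in the domain $[0,\tfrac12]$ on which $H_2$ is strictly increasing and hence admits the inverse $H_2^{-1}$ used in the statement. Moreover $H_2(\pi)-I(X_{\MarI};Z) = H(Z\mid X_{\MarI})\in[0,\log 2]$, and since conditioning reduces entropy this quantity is $\le H_2(\pi)\le \log 2$, so the right-hand side is a valid argument of $H_2^{-1}$. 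Applying the monotone $H_2^{-1}$ yields $P_e\ge H_2^{-1}(H_2(\pi)-I(X_{\MarI};Z))$, and the claim follows.

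The main obstacle is purely bookkeeping: making sure the quantity $H_2(\pi)-I(X_{\MarI};Z)$ lies in $[0,\log 2]$ so that $H_2^{-1}$ is well-defined, and that $P_e$ itself lies in the branch $[0,\tfrac12]$ where $H_2$ is invertible. Both follow from elementary entropy inequalities and from the optimality of the MAP rule under a Bernoulli prior, so no deep argument is required beyond the standard Fano derivation.
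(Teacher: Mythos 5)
Your proof is correct and follows essentially the same path as the paper's: both reduce the claim to $H(Z\mid X_{\MarI})\le H_2(P_e)$ via concavity of $H_2$ plus Jensen, then combine with $I(X_{\MarI};Z)=H_2(\pi)-H(Z\mid X_{\MarI})$ and invert $H_2$ on $[0,\tfrac12]$. Your detour through the error indicator $E$ is the textbook Fano decomposition, but for binary $Z$ it collapses to the paper's direct observation that $H(Z\mid X_{\MarI}=x)=H_2(\min\{p(x),1-p(x)\})$ (one garbled clause aside, your $H(Z\mid E,X_{\MarI})=0$ and $H(E\mid X_{\MarI})\le H_2(P_e)$ steps recover exactly that), so the two arguments are the same in substance; the paper additionally exhibits a tightness construction, which your write-up omits but the proposition does not strictly require.
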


\noindent Proof in Appendix~\ref{a:MI_accuracy_bound}. Here $P(Z\!\mid\!X_{\MarI})$ denotes the Bayes-optimal posterior between retain $r$ and union $d$. Note that $I(X_{\MarI};Z)\in[0,H_2(\pi)]$ satisfies: (i) $I(X_{\MarI};Z)=0$ when $p^{\,d}=p^{\,r}$; (ii) $I(X_{\MarI};Z)$ grows with their divergence, approaching $H_2(\pi)$ as $\mathcal D_r$ vanishes. Since $p^{\,d}\!\ne\!p^{\,r}$ occurs precisely when $u$ induces model confidence shifts not explained by $r$, Proposition~\ref{prop:MI_accuracy_bound} gives $I(X_{\MarI};Z)$ an intuitive meaning as the detectability of the marginal effect (Definition~\ref{def:MarI}).
\begin{defi}[MI-based marginal information loss]\label{def:marginal_info}
With $(X_{\MarI},Z)$ as in \eqref{eq:XZ_Pair}, define
$$
  \ell_{\MarI}(\theta,r,u)\;:=\;I(X_{\MarI};Z).
$$
\end{defi}
Thus, Forgetting-MarI solves
\begin{equation}\label{muitropt}
  \min_{\theta}\;\; \ell_{KL}(\theta,r) \;+\; \ell_{\MarI}(\theta,r,u),
\end{equation}
where $\ell_{KL}(\theta,r) := \KL\!\big(p^{\,r}(\theta)\,\|\,p^{\,r}(\theta_0)\big)$ is the KL divergence between the updated model (parameter $\theta$) and the frozen original model (parameter $\theta_0$) on $r$, and $\ell_{\MarI}$ is as above, motivated by Prop.~\ref{prop:MI_accuracy_bound}. Algorithm~\ref{alg:point_reg} describes an efficient LLM implementation.

\begin{rema}[Alternative quantification]\label{rema:marginal_info_alternative}
Marginal information measures the shift from $p_{\theta}(r)$ to $p_{\theta}(d)$. Alternatively, one may use $\ell'_{\MarI}(\theta,r,u):=\KL\!\big(p^{\,d}\,\|\,p^{\,r}\big)$ or $\KL\!\big(p^{\,d}\,\|\,p^{\,r}(\theta_0)\big)$. But mutual information has the advantage of (1) stability (boundedness), (2) interpretability (Proposition \ref{prop:MI_accuracy_bound}), and (3) continuous unlearning (evolving reference $m = \frac{p+q}{2}$). See Appendix \ref{app:why_mi} for details.
\end{rema}

\subsection{Marginal Information \& Perplexity-Based Detectors}\label{sec:detectors}

We provide theoretical guarantees for the unlearning performance of Forgetting-MarI against white-box copyright detectors that rely on model confidence (perplexity / cross-entropy). Let
\[
S_{\theta}(x,y)\;=\;\frac{1}{T}\sum_{t=1}^{T}\bigl(-\log p_{\theta}(x_t\mid y_{<t})\bigr)
\]
be the standard cross-entropy (per-token negative log-likelihood). State-of-the-art detectors \citep{carlini2021extract,zhang2024dcpdd,maini2024dataset} flag the membership of $x$ in training by testing whether $S_{\theta}(x,x)$ is suspiciously low. We adopt the notation from Section~\ref{ss:marginal}: sequences $r,u\in V^T$, next-token marginals $p_t^{\,r},p_t^{\,u}$, their mixture $p_t^{\,d}=\alpha\,p_t^{\,r}+(1-\alpha)\,p_t^{\,u}$ with $\alpha=\frac{|r|}{|r|+|u|}$, and the mutual information $I(X_{\MarI};Z)$.

The next result shows that, given a set of sequences to forget, denoted by $u$, Forgetting-MarI guarantees that there is a set of sequences in the retain set, denoted by $r$, such that the score $S_\theta(u,u)$ is close to $S_\theta(u,r)$. In other words, a high model confidence implied by $S_\theta(u,u)$ is possibly due to the existence of $r$ because one would get the same score for $u$ if feeding the model $r$ instead of $u$.

\begin{thm}[MarI controls the self-perplexity gap]\label{thm:self_perp_gap}
Fix $u=(u_1,\dots,u_T)\in V^T$ and assume the \emph{pathwise} non-vanishing condition $\min\{p_t^{\,u}(u_t),\,p_t^{\,r}(u_t)\}\ge\gamma\in(0,1]$ for all $t$. Then
\[
\Bigl|\,S_\theta(u,u)-S_\theta(u,r)\,\Bigr|
\ \le\ \frac{2\sqrt{2}}{\gamma(1-\alpha)}\;\sqrt{\,I(X_{\MarI};Z)\,}.
\]
\end{thm}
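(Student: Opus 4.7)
The strategy is a four-step chain: decompose the self-perplexity gap into per-token log-ratios, linearize using Lipschitzness of $\log$ on $[\gamma,1]$, rewrite the probability gap via the mixture identity $p_t^{\,d}=\alpha p_t^{\,r}+(1-\alpha)p_t^{\,u}$, bound the pointwise difference by $\JS(p_t^{\,d},p_t^{\,r})$ through Pinsker, and finally pull out the target quantity $I(X_{\MarI};Z)$ using identity \eqref{eq:Ihat_equals_avgJS}.

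First I would expand
\begin{equation*}
S_\theta(u,u)-S_\theta(u,r)\;=\;\frac{1}{T}\sum_{t=1}^{T}\bigl(\log p_t^{\,r}(u_t)-\log p_t^{\,u}(u_t)\bigr)
\end{equation*}
and move the absolute value inside the sum by the triangle inequality. The pathwise hypothesis places both $p_t^{\,u}(u_t)$ and $p_t^{\,r}(u_t)$ in $[\gamma,1]$, so by the mean-value theorem $|\log a-\log b|\le|a-b|/\gamma$, giving $|S_\theta(u,u)-S_\theta(u,r)|\le\frac{1}{\gamma T}\sum_{t}|p_t^{\,u}(u_t)-p_t^{\,r}(u_t)|$. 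Rearranging the mixture identity to $p_t^{\,u}-p_t^{\,r}=\tfrac{1}{1-\alpha}(p_t^{\,d}-p_t^{\,r})$ then introduces the $1/(1-\alpha)$ factor and reduces the problem to bounding $|p_t^{\,d}(u_t)-p_t^{\,r}(u_t)|$ in terms of $\JS(p_t^{\,d},p_t^{\,r})$.

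Next I would use the trivial pointwise bound $|p_t^{\,d}(u_t)-p_t^{\,r}(u_t)|\le\|p_t^{\,d}-p_t^{\,r}\|_1$ and then apply Pinsker symmetrically to the mixture $m_t=(p_t^{\,d}+p_t^{\,r})/2$. Since $\|p_t^{\,d}-m_t\|_1=\|p_t^{\,r}-m_t\|_1=\tfrac{1}{2}\|p_t^{\,d}-p_t^{\,r}\|_1$, Pinsker in $\ell_1$ form ($\KL(p\|q)\ge\tfrac{1}{2}\|p-q\|_1^2$) yields $\tfrac{1}{4}\|p_t^{\,d}-p_t^{\,r}\|_1^2\le 2\KL(p_t^{\,d}\|m_t)$ and likewise for $p_t^{\,r}$; averaging the two and using $2\JS(p_t^{\,d},p_t^{\,r})=\KL(p_t^{\,d}\|m_t)+\KL(p_t^{\,r}\|m_t)$ produces $\|p_t^{\,d}-p_t^{\,r}\|_1\le 2\sqrt{2\JS(p_t^{\,d},p_t^{\,r})}$, which is where the constant $2\sqrt{2}$ is born. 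Averaging over $t$ and pushing the square root outside by Jensen/Cauchy--Schwarz yields $\frac{1}{T}\sum_t\sqrt{\JS(p_t^{\,d},p_t^{\,r})}\le\sqrt{I(X_{\MarI};Z)}$ via \eqref{eq:Ihat_equals_avgJS}; collecting the factors $1/\gamma$, $1/(1-\alpha)$, and $2\sqrt{2}$ finishes the bound.

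The proof is essentially bookkeeping; the only mildly subtle move is the Pinsker symmetrization that converts two one-sided bounds into a single $\JS$ bound. The pathwise lower bound $\gamma$ enters only through the Lipschitz step, so any sharpening of the theorem would likely come from replacing this worst-case Lipschitz estimate with a token-averaged smoothness argument, rather than from improving the TV-to-$\JS$ conversion.
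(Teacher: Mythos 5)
Your proof is correct and follows essentially the same route as the paper's: mean-value-theorem Lipschitz bound on $\log$ using the $\gamma$ lower bound, the mixture identity $p_t^{\,u}-p_t^{\,r}=\tfrac{1}{1-\alpha}(p_t^{\,d}-p_t^{\,r})$ to produce the $1/(1-\alpha)$ factor, a Pinsker-based $\ell_1$-to-$\JS$ conversion, and Jensen to bring the square root outside the average over $t$. The only differences are cosmetic — you reprove inline what the paper packages as Lemma~\ref{lem:tv_scaling} and Lemma~\ref{lem:TV_JS}, and you apply the mixture rearrangement at the pointwise level before passing to $\ell_1$ rather than after passing to total variation — and the constants land in the same place.
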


\noindent See Appendix~\ref{a:self_perp_gap} for the proof. In particular, when $I(X_{\MarI};Z)$ goes to zero, the score gap above vanishes, and the perplexity/log-likelihood detectors lose discriminative power after unlearning.

Finally, we show that MarI directly controls the score gap even for a neighborhood of $u$ rather than only $u$ itself. The formal result and its proof can be found in Appendix~\ref{a:neighbor_perp_gap}.

\section{Algorithm Design}\label{S:Algo}

\emph{Token-wise} MarI (Equation~\ref{eq:Ihat_equals_avgJS}) provides strong guarantees when sentences in $r$ and $u$ are \emph{homogeneous} in length and token-wise context. In practice, however, token-wise MarI Loss can be noisy under heterogeneous batches.
To address this, we also provide a \emph{pooled} (“flattened”) estimator that first averages across token positions (and batch) to form $\bar p^{\,s}=\frac1T\sum_t p_t^{\,s}, \, s \in \{r,u,d\}$, then computes the pooled MarI Loss $I(\bar X_{\MarI};Z)=\JS\,\bigl(\bar p^{\,d},\bar p^{\,r}\bigr)$. Such a pooled version aims to stabilize the marginal information quantification by filtering the position-heterogeneous noise and emphasizing the dominant distribution shift.

By the data-processing inequality, the pooled estimator \(I(\bar X_{\MarI};Z)\) is a \emph{variational lower bound} to the token-wise MarI. The gap between the pooled and token-wise MarI losses is controlled by the $\ell^2$-deviation of the token-sequence densities (details in Theorem \ref{thm:pool_gap}[Pooling error bound], Appendix \ref{A:Section_3}). Furthermore, pooled MarI offers a specific word-wise guarantee:
\begin{thm}[Word-level provable unlearning via pooled MarI]\label{thm:word}
Fix a token \(w\in V\) and assume \(\bar p^{\,u}(w)\wedge \bar p^{\,r}(w) =: \bar\gamma_w\in(0,1]\).
Then
\[
\Big|\,\log \bar p^{\,u}(w)-\log \bar p^{\,r}(w)\,\Big|
\;\le\; \frac{2\sqrt{2}}{\bar\gamma_w\,(1-\alpha)}\;\sqrt{\,I(\bar X_{\MarI};Z)\,}.
\]
\end{thm}
Proof in Appendix \ref{A:Section_3}. Although weaker than the sentence-wise guarantee (Theorem \ref{thm:self_perp_gap}), this guarantee is sufficient for provable unlearning of specific tokens and can be useful for the provable removal of word-level (not related to sentence structure) information.

\begin{figure}[t]
    \centering
    \includegraphics[width=0.85\linewidth]{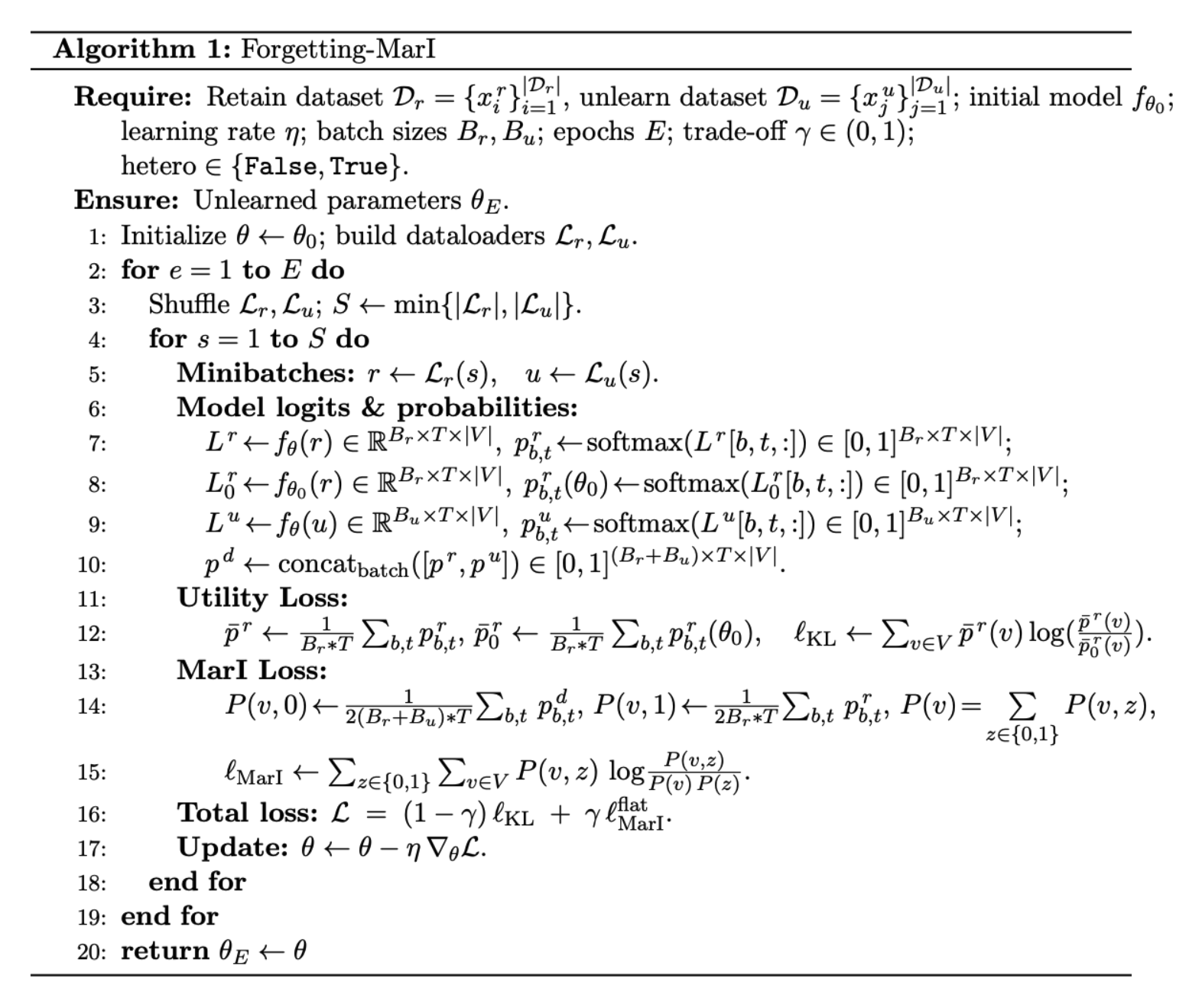}
    \caption{Pseudo-code for Forgetting-MarI.}
    \label{fig:algorithm}
\end{figure}

Figure \ref{fig:algorithm} presents pseudo-code for Forgetting-MarI with pooled MarI. A full, detailed algorithm is provided as Figure \ref{alg:point_reg} in Appendix \ref{A:Section_3}. We recommend the following protocol:
\begin{itemize}[leftmargin = *]
    \item \textbf{Estimator Selection:} Use \emph{token-wise} MarI for homogeneous data (aligned contexts) to leverage precise signals. Use \emph{pooled} MarI for large corpora with random batching to ensure stability.
    \item \textbf{Hyperparameter Tuning:} Fix a tolerable perplexity gap for the application, derive the required MarI threshold, and tune $\gamma$ until the regularization magnitude falls below this threshold. This invokes the guarantees of Theorem \ref{thm:self_perp_gap} or \ref{thm:word}.
\end{itemize}
As both estimators achieve comparable empirical results, Section~\ref{S:Experiment} reports findings under a unified label (comparative ablation in Appendix \ref{A:Section_3}).

\section{Experiments}\label{S:Experiment}

Our experiments are designed to address three questions:
\begin{itemize}[leftmargin = *]
    \item[1] \textit{Utility-Forgetting Trade-off:} Does Forgetting-MarI balances performance preservation on $\mathcal D_r$ while removing $\mathcal D_u$? By achieving similar unlearn performance $\mathcal{D}_u$ and utility preservation on $\mathcal{D}_r$ to the unlearn baseline (i.e., the retrain on retain model)?
    \item[2] \textit{Continual Unlearning:} Is the method robust to sequential deletion requests without incurring catastrophic forgetting?
    \item[3] \textit{Detectability \& Capacity:} Consistent with our theory, does the model defeat perplexity-based detectors while retaining general capabilities?
\end{itemize}


We compare against state-of-the-art full-parameter unlearning baselines. Table~\ref{table:comparison_methods} below summarizes these baselines alongside other unlearning approaches. 

\begin{table}[h]
\centering
\footnotesize
\setlength{\tabcolsep}{6pt}
\renewcommand{\arraystretch}{1.15}
\begin{tabularx}{\linewidth}{l X X X}
\toprule
\textbf{Method} &
\textbf{Unlearn objective} &
\textbf{Retain objective} &
\textbf{Unlearn nature} \\
\midrule
GA~\cite{yao2024llmunlearning} & ascent on unlearn set &
— &
full information \\
GD~\cite{liu2022continual} &
ascent on unlearn set &
descent on retain set &
full information \\
KL-GA~\cite{hong2024dissect} &
ascent on unlearn set &
minimize $\mathrm{KL}\!\big(p_\theta^{\,r}\,\|\,p_{\theta_0}^{\,r}\big)$ &
full information \\
DPO~\cite{rafailov2023dpo} &
preference loss &
minimize $\mathrm{CE}$ or $\mathrm{KL}$ &
full information \\
\textbf{Forgetting–MarI} &
minimize $I(X_{\mathrm{MarI}};Z)$ &
minimize $\mathrm{KL}\!\big(p_\theta^{\,r}\,\|\,p_{\theta_0}^{\,r}\big)$ &
marginal information \\
\bottomrule
\end{tabularx}
\caption{Comparison of LLM unlearning objectives. CE = cross-entropy; KL = Kullback–Leibler divergence. “Direct/Indirect” indicates whether the method explicitly penalizes marginal information (ours) or approximates it by balancing forget/retain signals.}
\label{table:comparison_methods}
\end{table}
  
\subsection{Models, Datasets, and Splits}\label{sec:exp_setup}

\textbf{Protocol.}
Following \citet{maini2024tofutaskfictitiousunlearning,eldan2023approximate}, we adopt a three-step evaluation:
(i) fine-tune on $\mathcal D_u\!\cup\!\mathcal D_r$ to obtain a \emph{full finetune baseline};
(ii) apply each unlearning method to remove the influence of $\mathcal D_u$ to obtain unlearn results, denoted by the name of corresponding unlearn methods;
(iii) restart the fine-tune but only on $\mathcal D_r$ (never exposed to $\mathcal D_u$) to obtain an \emph{unlearn baseline}.
An ideal unlearning method should preserve retain/validation accuracy level similar to the unlearn baseline while lowering the unlearn accuracy to a similar level to the unlearn baseline. We report next-token accuracy on $\mathcal D_r$, $\mathcal D_u$, and a held-out validation set, and assess general capability with the Eleuther's LM Evaluation Harness \cite{eval-harness}.

\textbf{Datasets.}
We evaluate Forgetting–MarI on two mid-scale LMs, GPT-2 Large (774M) and Llama-3.2-1B, and two text domains with distinct genres and pretraining prevalence:
(i) \emph{Harry Potter and the Prisoner of Azkaban}~\citep{rowling2013harry}, likely present in pretraining; and
(ii) \emph{Careless People: A Cautionary Tale of Power, Greed, and Lost Idealism}~\citep{wynn2025careless}, published after Llama’s release and thus unlikely to appear in its pretraining.

\textbf{Splits.}
For \textit{Harry Potter} (HP), we designate 10\% of sentences as $\mathcal D_u$ and the remaining 90\% as $\mathcal D_r$ (cf.\ \citealp{eldan2023approximate}); validation uses excerpts from another book in the series (\emph{Harry Potter and the Sorcerer’s Stone}~\cite{rowling1999harry}).
For \textit{Careless People} (CP), we consider:
(i) a \emph{correlated} split with contiguous 50/50 spans for $\mathcal D_u/\mathcal D_r$; and
(ii) an \emph{uncorrelated} split where $\mathcal D_u$ comprises 2025 Reddit stories (post-release) and $\mathcal D_r$ is 50\% of the book. Reddit stories are used as validation in the first setting and the remaining 50\% of the book is used as validation in the second setting.

\textbf{Parameter tuning \& ablation.}
For each method, we stop training once validation accuracy drops by more than 3\% from its initial value (to prevent general-utility degradation).
Hyperparameter sweeps, ablations, and full training trajectories are provided in Appendix~\ref{sec:training_curves}.

\textbf{Implementation details} for hardware specifications and runtime information can be found in Appendix \ref{appendix:gpu}.

\subsection{Accuracy Trade-off \& general capability}\label{sec:targeted}

\begin{figure}[h]
  \centering
\includegraphics[width=0.95\linewidth]{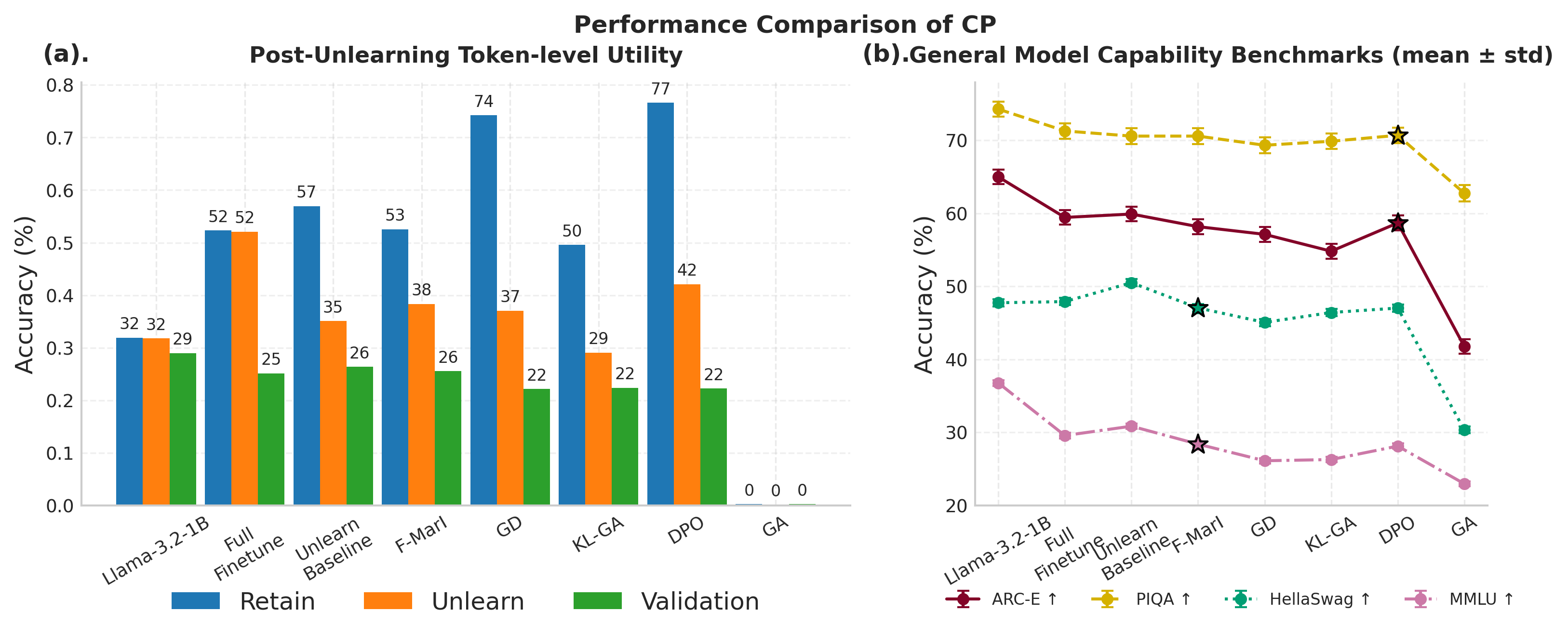}
\includegraphics[width=0.95\linewidth]{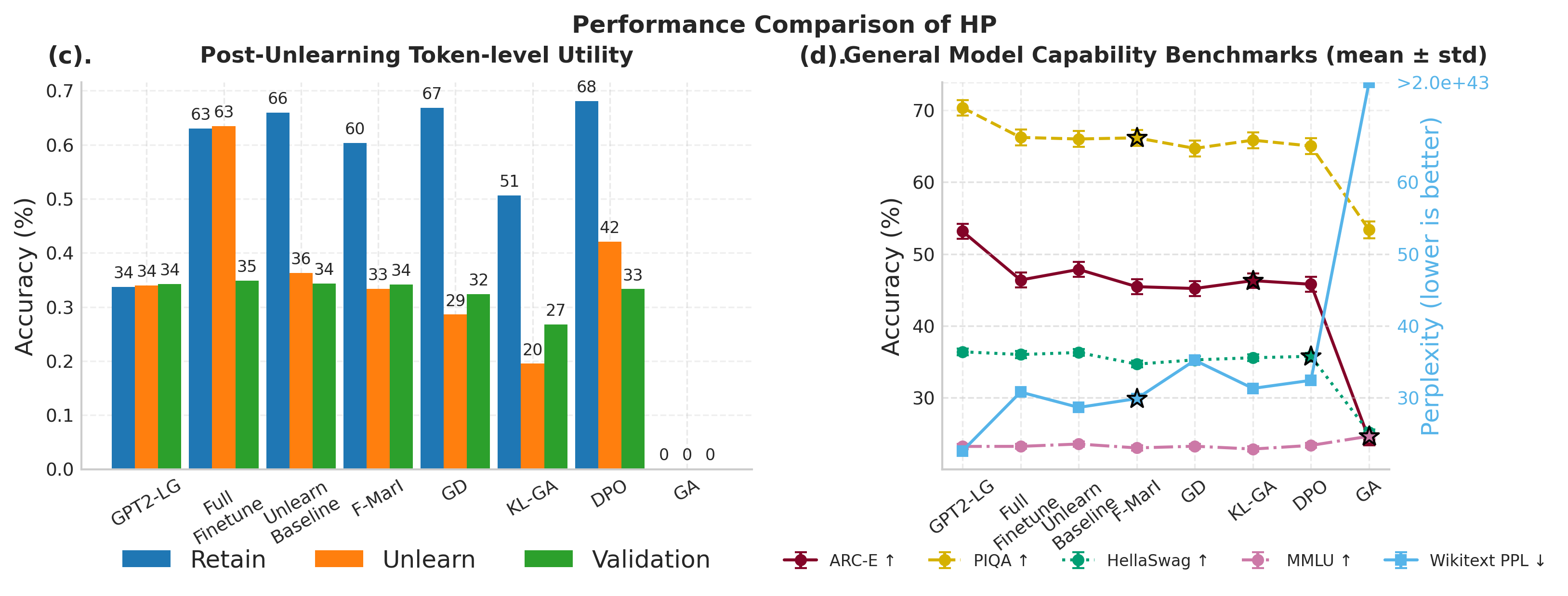}
  \caption{Left panels summarize the next-token accuracies on retain/unlearn/validation whereas the right panels summarize the general-capability on various benchmarks.
  Top row shows the results from Llama-3.2-1B on \emph{Careless People} (correlated split), where as bottom row shows the results from GPT-2 Large on \emph{Harry Potter}.
  Each method is reported at its best $\lambda$ and training epoch. On the left panels, an ideal method should match the unlearn baseline on retain/unlearn/validation accuracy on the left panels. On the right, better methods should achieve higher accuracy on ARC-E, HellaSwag, PIQA, MMLU and lower on WikiText perplexity test. Star indicates the best performer on that test.} 
  \label{fig:bar_comparison}
\end{figure}

Below, unlearn baseline is the model fine-tuned on retain set only, which serves as a retrain from scratch model.

\textbf{Token-level utility (left panels).} Across HP and CP, an ideal unlearning method should achieve similar retain/unlearn/validation accuracy as the unlearn baseline. Forgetting–MarI closely matches the unlearn baseline by achieving similar retain/unlearn/validation accuracy in both HP and CP experiments. In comparison, we observe unstable behaviors from methods with full-information unlearning nature:
\textit{(i) Careless People}: GD and DPO both overtrain on $\mathcal D_r$, while KL-GA struggles to remove the information in $\mathcal D_u$ while maintaining performance on $\mathcal D_r$. Moreover, all other methods show a gradual degradation on the validation accuracy. (The information of uncorrelated retain/unlearn split experiment is in Appendix~\ref{A:Section_4}.)
\textit{(ii) HP}: GD and DPO are less overtrain on $\mathcal D_r$ but either over- or under-unlearn compared to the unlearn baseline. KL-GA still struggles to remove the information in $\mathcal D_u$ while maintaining performance on $\mathcal D_r$.


\textbf{General capability (right panels).} We compare the unlearned models against the finetuned baseline, the unlearn baseline, and the original GPT-2 (before finetuning) on ARC-E, PIQA, HellaSwag, WikiText, and MMLU \cite{eval-harness}.
\begin{itemize}[leftmargin = *]
  \item \emph{Llama on Careless People.} Forgetting–MarI attains the best results on HellaSwag and MMLU and ranks second on PIQA and ARC-E (DPO is first there but fails to unlearn effectively). It is also the only method that largely matches the full finetune baseline, which implies no model capacity degradation, because unlearning starts from full finetune baseline. Furthermore, notice that full finetuned baseline has lower general capacity than the unlearn baseline. That suggests a better finetuning could help Forgetting-MarI match the unlearn baseline across all benchmarks.
  \item \emph{GPT-2 on Harry Potter.} Forgetting–MarI is best on PIQA and WikiText, second on ARC-E, and slightly behind KL-GA/GD on HellaSwag. Furthermore, we note that high scores on general benchmarks do not guarantee meaningful utility: For instance, GA can outperform all other methods on MMLU despite lacking any utility preservation (and exhibiting very high WikiText perplexity which is consistent with theory). Thus, evaluating perplexity or token-level confidence is essential to avoid confounding factors.
\end{itemize}
In summary, Forgetting–MarI delivers targeted forgetting while largely preserving general capability, outperforming full-information baselines on the retain/unlearn/validation trade-off. See complete benchmark tables for experiment results on ARC-E, PIQA, MMLU, HellaSwag, and WikiText in Appendix~\ref{a:general_capacity} (Tables~\ref{tab:GPT_comprehensive_eval} and~\ref{tab:llama_comprehensive_eval}).

\subsection{Continual unlearning with sequential deletion requests}
\paragraph{Setup.}
For continual unlearning, we adopt a multi-stage protocol: we partition the total forget set into three disjoint subsets $\mathcal D_u = \mathcal D_{u,1} \cup \mathcal D_{u,2} \cup \mathcal D_{u,3}$. Starting from a baseline model fine-tuned on the full union $(\mathcal D_r \cup \mathcal D_u)$, we perform three sequential unlearning steps: at step $t$, we unlearn $\mathcal D_{u,t}$ from the model resulting from step $t-1$. The subsets are defined as follows: (1) Harry Potter (Character-based): To emulate user-level unlearning, we build an HP “forget-characters” benchmark: we assign sentences to characters via alias matching (e.g., “Hermione”, “Hermione Granger”, “Granger” $\rightarrow$ Hermione). We target three characters sequentially: $\mathcal D_{u,1}$ (Hermione) $\to$ $\mathcal D_{u,2}$ (Snape) $\to$ $\mathcal D_{u,3}$ (Ron). The retain set $\mathcal D_r$ includes all “other’’ sentences, all retain-character sentences (e.g., Harry, Dumbledore), and sentences of not-yet-unlearned characters. (2) Careless People (Random-split): We randomly partition the designated forget set into three equal, disjoint subsets to form $\mathcal D_{u,1}$, $\mathcal D_{u,2}$, and $\mathcal D_{u,3}$. The retain set $\mathcal D_r$ consists of the remaining text from the book.

\begin{figure}[t]
  \centering
  \begin{subfigure}{0.98\linewidth}
    \centering
    \includegraphics[width=\linewidth]{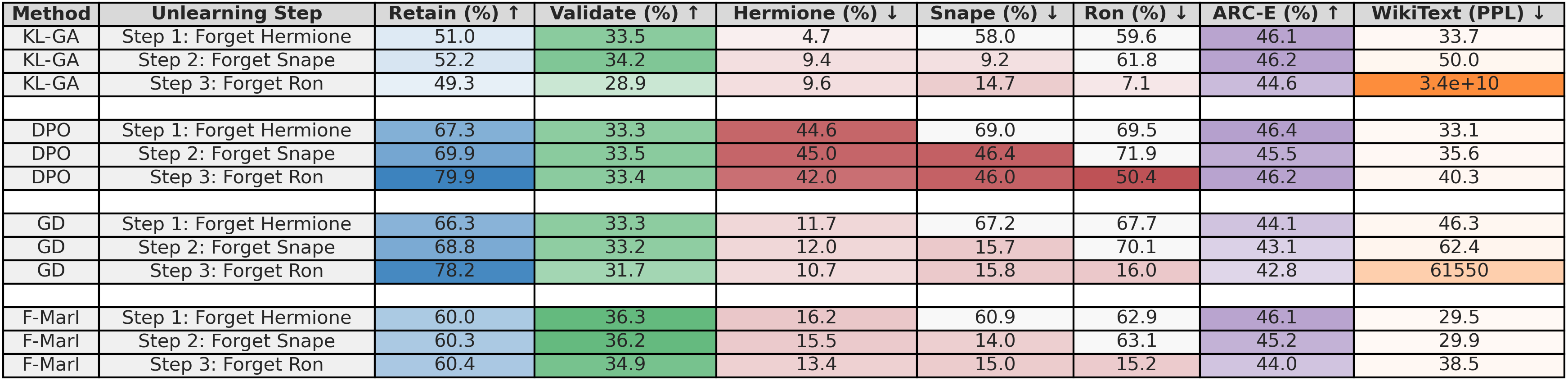}
    \caption{GPT2-LG on \emph{Harry Potter} (HP).}
    \label{fig:continual:a}
  \end{subfigure}
  \vspace{3pt}
  \begin{subfigure}{0.98\linewidth}
    \centering
    \includegraphics[width=\linewidth]{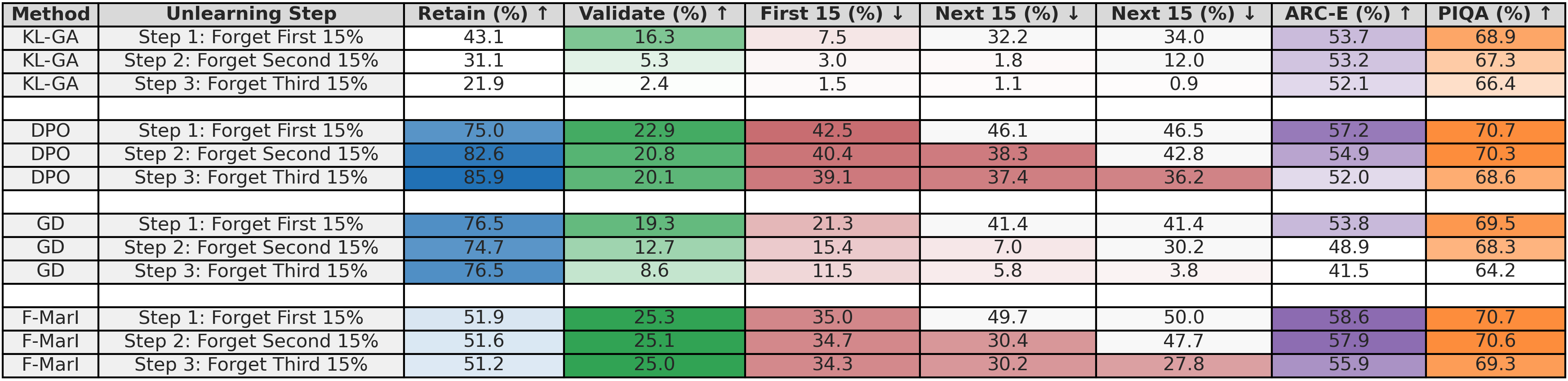}
    \caption{Llama-3.2-1B on \emph{Careless People} (CP).}
    \label{fig:continual:b}
  \end{subfigure}
  \caption{Continual unlearning across models and datasets.
  Rows show methods and sequential steps; columns report retain/validation accuracies, performance on chunk-specific content, and general capability.
  Darker color indicates higher accuracy.
  For general capabilities (ARC-E, PIQA and WikiText): higher accuracies/lower perplexities and stable performance across steps indicates successful knowledge preservation. For unlearned content: ideal pattern shows accuracy drop after removal, with persistent forgetting in subsequent steps.}
  \label{fig:continual}
\end{figure}

\textbf{Results.}
Figures~\ref{fig:continual} summarize GPT-2/HP and Llama/CP results, respectively:
\begin{itemize}[leftmargin = *]
  \item \emph{GPT-2 on HP.} Forgetting–MarI is the only method that remains robust across retain and validation accuracy, preserves forgetting on previously unlearned sets, and sustains general capability. By contrast, KL-GA tends to \emph{relearn} previously forgotten content and substantially degrades WikiText performance; DPO fails to effectively unlearn; GD overshoots on $\mathcal D_r$ and also loses general capability (WikiText).
  \item \emph{Llama on CP.} Forgetting–MarI again exhibits the most consistent behavior: retain/validation accuracy and previously unlearned sets remain stable, and ARC-E/PIQA show minimal drift. In comparison, KL-GA quickly over-unlearns, lowering retain/validation and prior-unlearn accuracies; DPO again fails to unlearn effectively; GD overshoots on $\mathcal D_r$, reduces validation accuracy, and shows deteriorating ARC-E/PIQA performance.
\end{itemize}
\textbf{Conclusion.}
Forgetting–MarI delivers robust sequential unlearning: it preserves utility, maintains prior forgetting, and sustains general capability across steps, outperforming full-information baselines in both performance and stability.

\subsection{Detector evaluation: Empirical verification of theoretical guarantees}\label{s:detection}

Theorems~\ref{thm:self_perp_gap} implies that, after Forgetting-MarI, the mutual information between logits and the ``seen/unseen" bit $Z$ is negligible; hence \emph{any} confidence-based test (perplexity, cross-entropy, log-likelihood, etc.) should fail to separate forgotten from genuinely unseen text.

\begin{figure}[h]
  \centering
  \includegraphics[height=0.32\linewidth]{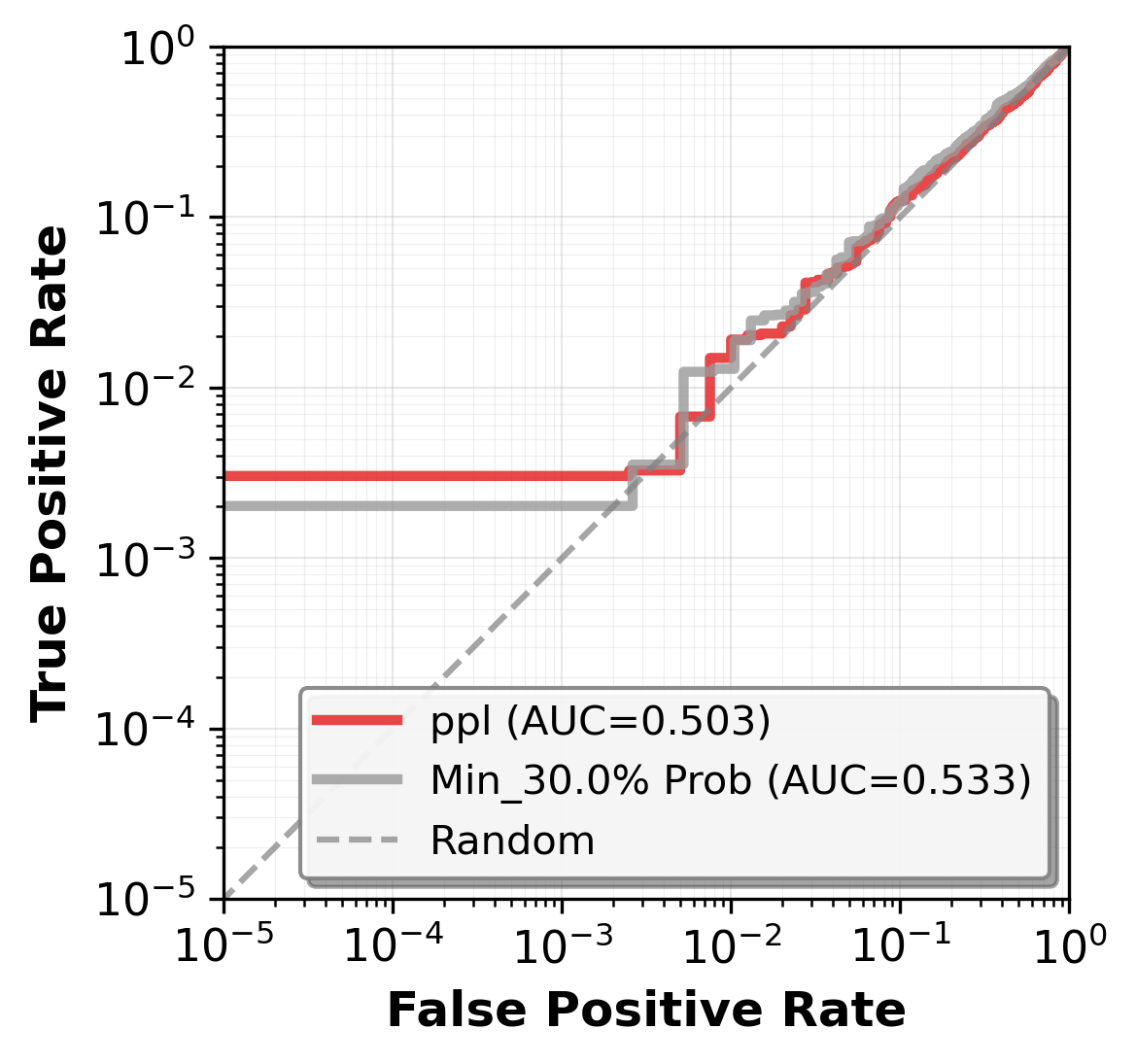}
  \includegraphics[height=0.32\linewidth]{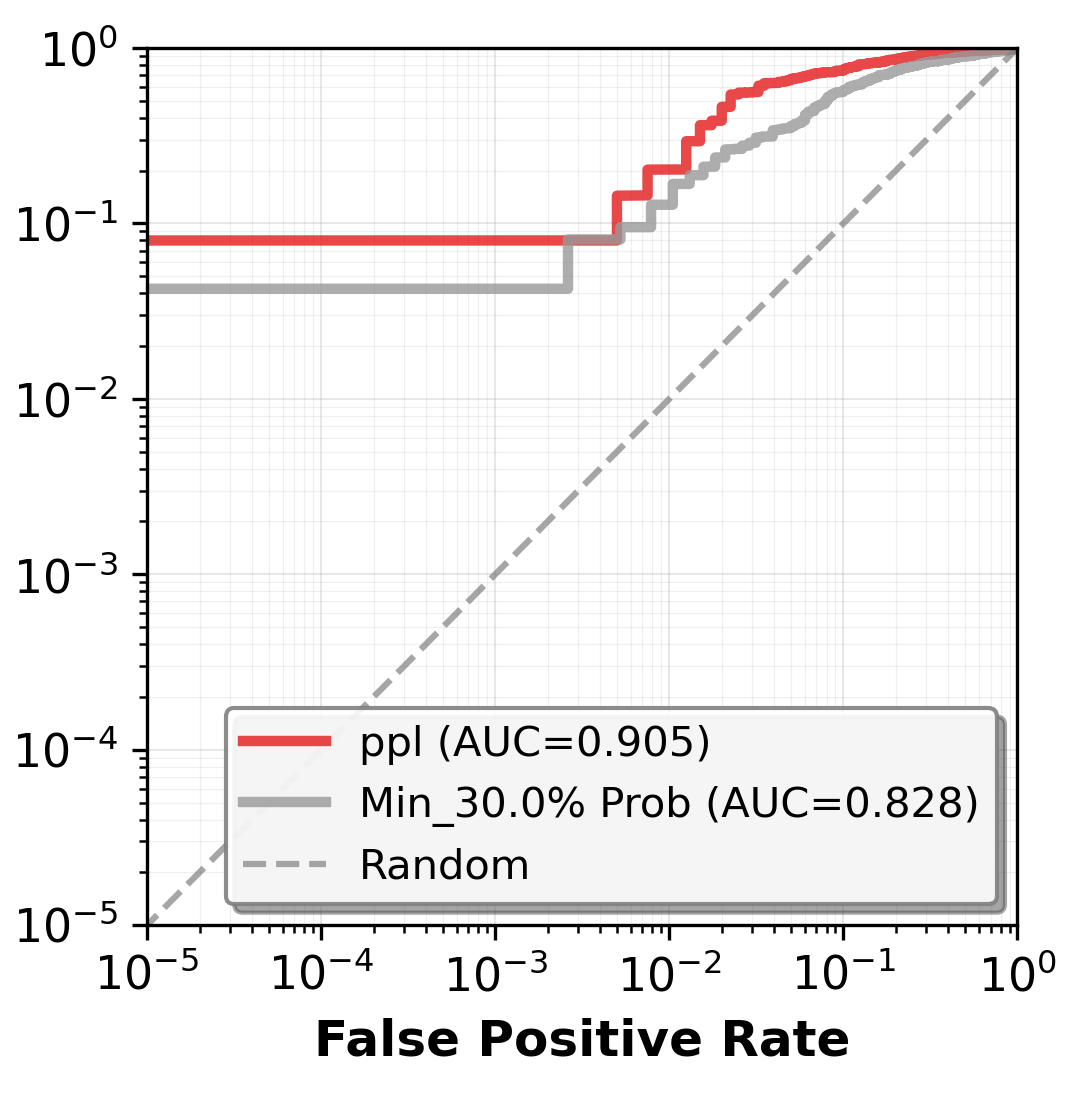}
  \includegraphics[height=0.32\linewidth]{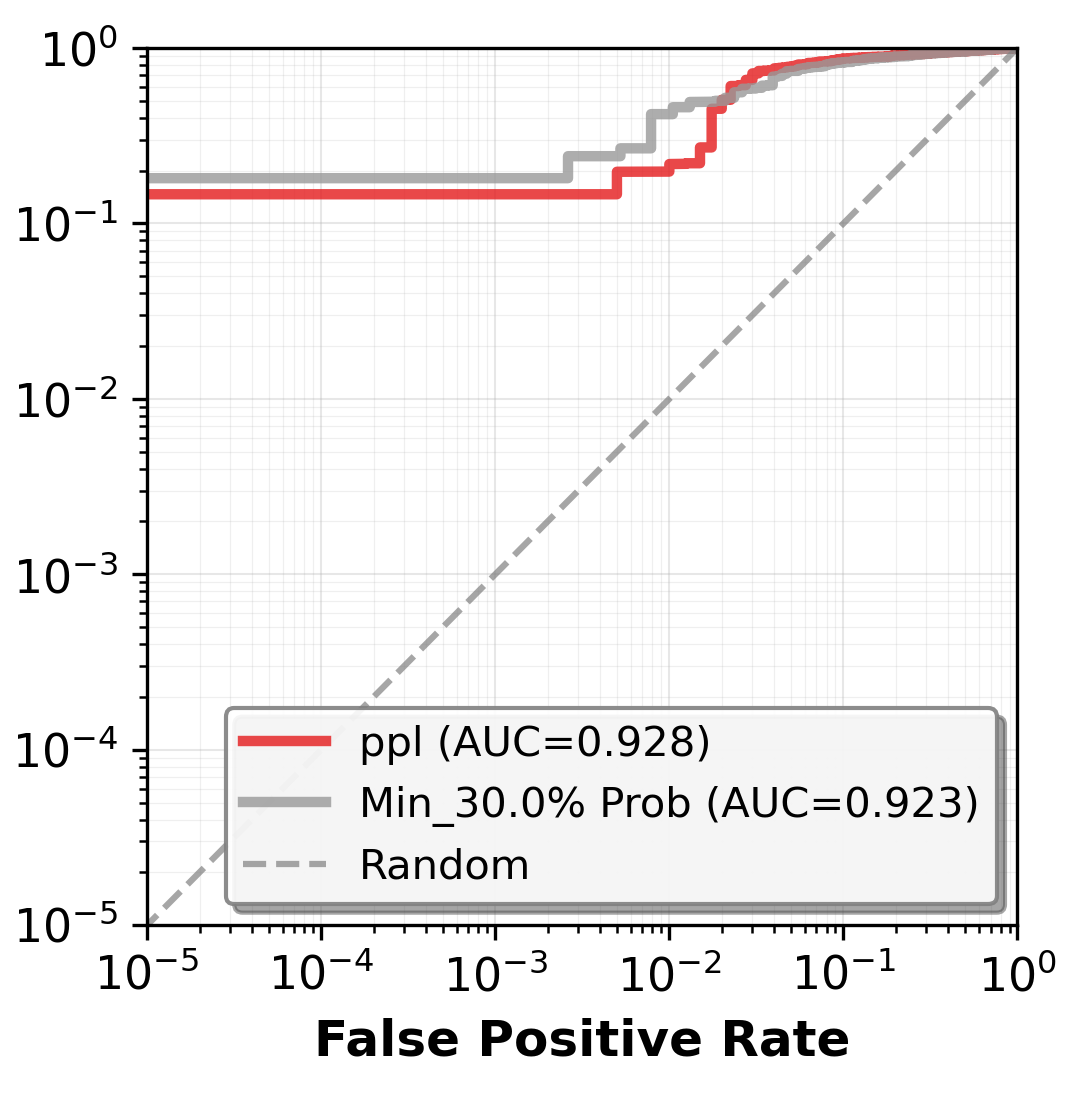}
  \caption{Detector performance for a GPT2-LG model without unlearning (left), unlearned with Forgetting-MarI (middle), and the golden-standard unlearn baseline (right). Here, ppl = perplexity.}
  \label{fig:detector_Harry_Potter}
\end{figure}

Copyrighted-text detection methods fall into two families: (i) \textbf{white-box} detectors~\citep{carlini2021extract,zhang2024dcpdd,maini2024dataset,shi2023detecting}, which use (tail or reference-model) perplexity to infer training membership; (ii) \textbf{black-box} detectors~\citep{karamolegkou2023leak,chang2023cloze,duarte2024decop,dong2024consistency}, which rely on string-level similarity without logits. Because Forgetting-MarI (and most threat models) allow weight access, white-box tests are strictly harder to defeat; we thus focus on them. (See Appendix~\ref{A:Section_4} for method details and additional results.)

We ran the current SOTA white-box detector \cite{shi2023detecting} on: (i) the model finetuned on $\mathcal D_r\!\cup\!\mathcal D_u$, (ii) the \emph{gold-standard} unlearn baseline trained only on $\mathcal D_r$, and (iii) model (i) after Forgetting-MarI. We report ROC–AUC: low values indicate the detector believes the model was trained with $\mathcal D_u$, high values indicate the opposite. As shown in Fig.~\ref{fig:detector_Harry_Potter}, the ROC–AUC after Forgetting-MarI closely matches the unlearn baseline, indicating effective removal of $\mathcal D_u$’s influence, as predicted by theory.

\section{Conclusion}\label{S:Conclusion}
This work presents Forgetting-MarI, a novel approach to LLM unlearning that improves upon existing state-of-the-art unlearning methods while providing rigorous theoretical guarantees. Our experimental results across multiple benchmarks confirm the practical effectiveness of our proposed technique, while our theoretical analysis establishes formal bounds on the unlearning process and convergence properties.

The combination of strong empirical results and theoretical foundations represents a significant advancement in machine unlearning for LLMs. However, several important directions remain for future research. First, while our theoretical guarantees provide valuable insights into the method's behavior, there is still a gap between theoretical bounds and the practical performance we observed. For example, it is still unknown how Forgetting-MarI finds the unearthed baseline with certainty. Bridging this gap could lead to tighter analysis and potentially improved algorithms.

Second, our work highlights the importance of parameter selection in unlearning effectiveness. Developing principled approaches for optimal parameter tuning, especially with theoretical guidance, remains an open challenge that could significantly enhance the practicality of unlearning methods. Additionally, future work could explore the scalability of our approach to even larger models and datasets, investigate our method's robustness across different model architectures and domains, replace retain set with new data set to remove retain access assumption or for more robust fine-tuning, and the principles of our approach could be applied to models trained on other data modalities.

As LLMs continue to grow in capability and deployment, developing reliable and theoretically grounded unlearning methods becomes increasingly important for responsible AI development and deployment. Forgetting-MarI is an important step towards that end.
\footnote{During the preparation of this work, the authors used large language model ChatGPT by OpenAI to refine the language and enhance readability. After using this tool or service, the authors reviewed and edited the content as needed and take full responsibility for the content of the publication.}

\section*{Acknowledgment}

The authors acknowledge support from NSF DMS-2208356,  NIH R01HL16351,  
P41EB032840, and DE-SC0023490.

\newpage

\bibliography{iclr2026_conference}

\newpage

\appendix
\addtocontents{toc}{\protect\setcounter{tocdepth}{2}}
\tableofcontents 

\newpage

\section{Appendix of Section 1}\label{A:Section_1}
\subsection{Details of LLM Unlearning Methods: Implicit Marginal Information Unlearning via conflicting forces}\label{a:existing_unlearning_indirect}
Recent surveys highlight four broad families of LLM-unlearning techniques, each making a different compromise between \emph{unlearn efficacy}, the ability to remove information from a model, \emph{utility preservation}, how well the model performs on the remaining data, and \emph{computational cost}, the resources expended to perform the unlearning~\citep{liu2024rethinkingmachineunlearninglarge}. A heuristic commonality of the techniques is their implicit/indirect target of marginal unlearning: all the methods tend to detect and thereby remove only the marginal effect of adding an ``unlearn set" ($\mathcal D_u$), the dataset that is meant to be forgotten, to a ``retain set" ($\mathcal D_r$),  the dataset that the model should remember, on the given model.

\paragraph{Full parameter fine-tuning:} These techniques train and perform weight updates on the whole model. Gradient ascent (or ``loss reversal")~\citep{yao2024llmunlearning} is the most straight-forward unlearning technique. It directly maximizes the cross-entropy on $\mathcal D_u$, effectively penalizing the model performance on the unlearn set. This type of unlearning was shown to lead to an overall decrease in model performance, so Gradient Difference \cite{liu2024rethinkingmachineunlearninglarge} was developed to balance unlearning while maintaining general model performance. Gradient Difference maximizes the cross-entropy loss on $\mathcal D_u$ while continuing to \emph{minimize} the loss on $\mathcal D_r$:
\begin{equation*}
      \min_{\theta}\;
      \underbrace{\mathbb{E}_{x\in\mathcal{D}_r}
       \ell(\theta;x)}_{\text{utility}}
      \;-\;
      \lambda\,
      \underbrace{\mathbb{E}_{x\in\mathcal{D}_u}
       \ell(\theta;x)}_{\text{loss\,reversal}},
\end{equation*}
\begin{equation*}
    \ell(\theta;x)=
      \text{CE}\!\;\bigl(p_\theta(\,\cdot\mid x_{<t}),x_t\bigr).
\end{equation*}
Here $\lambda>0$ balances utility preservation and unlearning. Intuitively, gradient descent is applied on $\mathcal D_r$ while gradient \emph{ascent} is applied on $\mathcal D_u$.

Follow-up studies revealed that, even when balanced with gradient descent, this global ascent signal is too coarse: it suppresses the target examples but also degrades correlated yet legitimate content \citep{hong2024dissect}. To overcome this challenge, variants have aimed to improve both sides of the problem. For utility preservation, past work has shown that distillation-style regularization with a Kullback–Leibler (KL) divergence penalty outperforms gradient descent on $\mathcal D_r$ in keeping the updated model close to the original without over-training on the retain set. For unlearning, alignment-style variants such as Direct Preference Optimization (DPO) and Negative Preference Optimization (NPO) replace the unlearn objective with more specific preference-based objectives, slowing catastrophic performance collapse~\citep{rafailov2023dpo,zhang2024npo}. However, such preference-supervised methods can be difficult to generalize to unlearn at a large scale.

Finally, from the perspective of \emph{marginal} unlearning, these full-parameter objectives act as \emph{indirect proxies} for the marginal effect of adding $\mathcal D_u$ to $\mathcal D_r$: they rely on carefully balancing ascent on $\mathcal D_u$ and descent (or KL regularization) on $\mathcal D_r$. In practice, such proxies can be neither the most \emph{effective} nor the most \emph{efficient} at isolating the unique contribution of $\mathcal D_u$ without erasing information shared with $\mathcal D_r$.

\paragraph{Weight editing and partial tuning:}
In an effort to perform unlearning more efficiently, this line of methods focuses on selectively altering only a subset of a model's parameters rather than retraining the entire network. Such ``model-surgery’’ methods perform rank-constrained updates at one or a few layers. Rank-One Model Editing (ROME) edits a single MLP weight with a closed-form rank-1 patch~\citep{meng2022rome}. In particular, it modifies only the weights causally responsible for one token sequence in the unlearn set:
\begin{equation*}
    \min_{\Delta W}\;\|\Delta W\|_F^2 \;\;\text{s.t.}\;\; W_{l^\star}h_{l^\star}(x)+\Delta W h_{l^\star}(x)=v_{\text{new}}.
\end{equation*}
Here, $\Delta W:=\frac{\bigl(v_{\text{new}}-v_{\text{old}}\bigr)h^\top}{\|h\|_2^{2}}$, $l^*$ is the layer most influenced by the unlearn sample or prompt $x$, $h_{l^*}(x)$ is the activation and $W_{l^*}$ is the weight matrix of layer $l^*$, $v_{old} := W_{l^*}h_{l^*}(x)$, and finally $v_{new}$ is the alternative answer we want to replace $v_{old}$ by. Mass Editing Memory in a Transformer (MEMIT) \cite{meng2023memit} extends this idea to thousands of facts simultaneously and stacks many $(h^i_{l^*},v^i_{l^*})$ pairs. AlphaEdit furthers the idea by projecting edits into the null space of preserved knowledge, with the aim to improve robustness in sequential settings, ensuring minimal disruption to previously learned information. Detecting and Editing Privacy Neurons (DEPN) ~\citep{wu2023depn} masks the gradients of neurons identified as contributing the most to the prediction of privacy-related content. In general, weight editing and partial tuning techniques are fast, but they are limited to short factual associations and struggle with stylistic or distributed knowledge.

Finally, the above weight-editing and partial-tuning methods share a common \emph{indirect marginal unlearning} proxy: they infer marginal information by targeting parameters most influenced by the unlearn set, while largely ignoring parameters most influenced by the retain set. This can help isolate some marginal information signal, but again risks overlooking deep interactions between $r$ and $u$.

\paragraph{Curating counterfactuals:} Instead of directly unlearning all or part of the model, another approach is to substitute the parametric knowledge of the unlearn set with benign knowledge. Broadly, this class of methods can be characterized by:
\begin{equation*}
\min_{\theta}\; \underbrace{\mathbb{E}_{x\in\mathcal{D}_r}\! \bigl[\ell(\theta;x)\bigr]}_{\text{retain utility}} \;+\; \lambda\, \underbrace{\mathbb{E}_{x\in\mathcal{D}_{\text{neg}}}\! \bigl[\ell(\theta;x)\bigr]}_{\text{counterfactual prompts}},
\end{equation*}
where $\mathcal{D}_{\text{neg}}$ contains prompts or contexts designed to \emph{neutralize} the influence of the unlearn set, $\ell(\theta;x)$ is the same cross entropy loss as before, and $\lambda>0$ balances unlearning against utility.

``I don't know''~\citep{ishibashi2024sanitization} trains the model on question-answer pairs that map sensitive questions to a safe refusal (e.g.\ ``I don’t know''), teaching the model to decline queries about the unlearn set. Entity anonymization~\citep{eldan2023approximate} replaces sensitive entities with anonymized placeholders and trains the model on the rewritten placeholders to scrub identifiable information from the model. Unlearning Large Language Models via Negative Response and Model Parameter Average (ULMR) ~\citep{shi2024ulmr} constructs adversarial ``negative’’ prompts, trains on the paired responses, and then averages the updated weights with the base model to dampen overshoot. Selective Knowledge‐negation Unlearning (SKU) first mines harmful or copyrighted contexts via red-teaming, then injects counterfactuals that negate them~\citep{liu2024sku}. Such approaches are easy to deploy but depend heavily on prompt engineering and high-quality counterexamples.

From a marginal information proxy perspective, the curating counterfactuals approach aims to first penalize model utility related to the unlearn set by replacing the original model capability on the unlearn set with a lower-utility capacity on the counterfactuals, then rescue the utility related to the retain set using the utility preservation term, and finally balance the two to indirectly find the marginal information and penalize it.

\paragraph{Model adaptation:} These methods train something external to the model and then use that externally trained adapter to update the model itself. A common instantiation is the task-vector framework: let $p_{\theta_0}$ be the original model and $p_{\theta_u}$ the same model fine-tuned on the unlearn set $\mathcal D_u$. The element-wise difference $\Delta\theta := \theta_u-\theta_0$ is treated as an encoding of the deleted knowledge and direct-subtraction methods \citep{ilharco2023task} form the unlearned model as $p_{\theta_0-\Delta\theta}$. Orthogonality offers an alternative geometric control. $O^3$ \citep{gao2024o3} trains one orthogonal LoRA adapter per removal request and learns a contrastive out-of-distribution (OOD) gate that activates the corresponding adapter at inference time. Orthogonality limits interference between requests, but the approach incurs two key costs: (1)~the number of adapters (and hence memory) grows linearly with the number of unlearning requests, and (2)~any mismatch between model behavior and the assumed linear/inner-product structure in weight space can undermine both unlearning guarantees and downstream utility.

From a \emph{marginal-information} viewpoint, model-adaptation methods isolate the contribution of $\mathcal D_u$ by (i) subtracting the unlearn-induced task vector from the retain base, $\theta_0-\Delta\theta$, or (ii) enforcing orthogonality between components aligned with retain and the unlearn signals and then penalize the isolated component. Both approaches can be considered as proxy of marginal information, though with strong arithmetic or geometric assumptions.

The proposed method, Forgetting-MarI, belongs to the full-parameter fine-tuning category. It applies a ``marginal information'' penalty that suppresses only the influence of the unlearn set while leaving the shared information, which is supported by the retain data, largely intact.

\newpage

\section{Appendix of Section 2}\label{A:Section_2}
\subsection{Proof of Proposition \ref{prop:MI_accuracy_bound}}\label{a:MI_accuracy_bound}

\begin{proof}
To start, define the Bayes error as
\begin{equation*}
    P_e := \mathbb E_{X_{\MarI}} \Bigl[\min \; \!\bigl\{P(Z=0\mid X_{\MarI}),\,P(Z=1\mid X_{\MarI})\bigr\}\Bigr] = 1 - P_{acc}.
\end{equation*}
In addition, for each $x$, let $p(x) := P(Z=1\mid X_{\MarI}=x) \in [0,1]$ be the conditional probability of $\{Z = 1\}$ given $\{X_{\MarI} = x\}$. Then it follows from $Z$ being binary that $H(Z\mid X_{\MarI}=x)=H_2\bigl(p(x)\bigr)$. Denote $m(X_{\MarI}):=\min \; \!\bigl\{P(Z=0\mid X_{\MarI}),\,P(Z=1\mid X_{\MarI})\bigr\}$.
Since $H_2$ is concave, it follows from Jensen's inequality that
\begin{align*}
    H(Z\mid X_{\MarI}) & \; = \; \mathbb E_{X_{\MarI}}\bigl[H_2(p(X_{\MarI}))\bigr]\\
    & \; = \; \mathbb E_{X_{\MarI}}\bigl[H_2(m(X_{\MarI}))\bigr]\\
    & \; \le \; H_2\bigl(\mathbb E_{X_{\MarI}}[m(X_{\MarI})]\bigr)\\
    & \; = \; H_2(P_e).
\end{align*}
 where the second equality holds due to the fact that $H_2(p)=H_2(1-p)$. Now, since $I(X_{\MarI};Z)=H(Z)-H(Z\mid X_{\MarI})$ and $H(Z)=H_2(\pi)$, we obtain
\begin{equation*}
    H_2(\pi)-I(X_{\MarI};Z) = H(Z\mid X_{\MarI}) \;\le\; H_2(P_e).
\end{equation*}
Since $P_e\in[0,\tfrac12]$ and $H_2$ is strictly increasing on this interval, by applying the inverse $H_2^{-1}$, we have
\begin{equation*}
    P_e \;\ge\; H_2^{-1}\!\bigl(H_2(\pi)-I(X_{\MarI};Z)\bigr).
\end{equation*}
Finally, by $P_{acc} = 1 - P_e$, we have
\begin{equation*}
    P_{acc} \le 1-H_2^{-1}\!\bigl(H_2(\pi)-I(X_{\MarI};Z)\bigr).
\end{equation*}
This proves the stated inequality. The particular case $\pi=\tfrac12$ follows from $H_2(\tfrac12)=1$.

It remains to show that the upper bound is tight. Indeed, fix an arbitrary $I\in[0,H_2(\pi)]$. Choose $p^\star\in\bigl[\tfrac12,1\bigr]$ such that $H_2(p^\star)=H_2(\pi)-I$. Construct $P_{Z\mid X_{\MarI}}$ such that
$P(Z=1\mid X_{\MarI})\in\{p^\star,\,1-p^\star\}$ with probabilities chosen to match the prior $\pi$.
Then $H(Z\mid X_{\MarI})=H_2(p^\star)$ and $I(X_{\MarI};Z)=I$,
while the Bayes error satisfies
$P_e=\min\{p^\star,1-p^\star\}=H_2^{-1}(H(Z)-I)$.
Hence, equality holds in the bound.
\end{proof}

\subsection{Why Mutual Information Rather Than KL Divergence}\label{app:why_mi}
One might consider penalizing a directional KL divergence between the ``to-unlearn'' and ``to-retain'' distributions. Instead, we regularize the \emph{mutual information} between the model output and a binary indicator of sensitive content, which is equal to the Jensen-Shannon divergence as shown in Section \ref{S:Marginal Effect}. Here, we show that mutual information offers several advantages over one–way or two-way KL divergence:

\begin{itemize}[leftmargin=*, itemsep=2pt, topsep=2pt]
\item \emph{Flexibility for utility and continual unlearning.} The reference $m$ in Jensen-Shannon divergence is the \emph{mixture} of the two conditionals and evolves with training; we do not assume a fixed ``gold'' model. This yields a pure unlearning regularizer that can be combined with any utility term (e.g., $\ell_{\mathrm{KL}}(\theta,r)$) and naturally supports continual/online updates.
\item \emph{Stable training signal.} $I(\hat X;Z)\le H_2(\pi)\le\log 2$ for binary $Z$, so the gradients remain well-behaved even when supports differ, unlike one–way KL which can be unbounded on support mismatch.
\item \emph{Downstream robustness via data processing.} For any downstream representation or task $T=g(\hat X)$, the data-processing inequality gives $I(T;Z)\le I(\hat X;Z)$. Thus, suppressing $I(\hat X;Z)$ at the model output (or an internal layer) upper-bounds leakage throughout the pipeline.
\end{itemize}

In contrast, a directional KL requires committing to a \emph{fixed} target (encoding a specific utility assumption) and can be unstable or unbounded when supports are disjoint. That said, if an ideal frozen reference is indeed mandated, a one–way KL to that reference is a reasonable alternative.

\subsection{Proof of Theorem \ref{thm:self_perp_gap}}\label{a:self_perp_gap}

Here, we provide the proof for Theorem \ref{thm:self_perp_gap}:

\begin{proof}
By the mean value theorem, for each \(t\) there exists \(\xi_t\in[\min\{p_t^{\,u}(u_t),p_t^{\,r}(u_t)\},1]\subseteq[\gamma,1]\) such that
\begin{align*}
    \bigl|\log p_t^{\,u}(u_t)-\log p_t^{\,r}(u_t)\bigr|
& =\frac{\bigl|p_t^{\,u}(u_t)-p_t^{\,r}(u_t)\bigr|}{\xi_t}\\
& \le \frac{\bigl|p_t^{\,u}(u_t)-p_t^{\,r}(u_t)\bigr|}{\gamma}\\
& \le \frac{\|p_t^{\,u}-p_t^{\,r}\|_1}{\gamma}\\
& =\frac{2\,\|p_t^{\,u}-p_t^{\,r}\|_{TV}}{\gamma}.
\end{align*}
Averaging over \(t\),
\[
\bigl|S_\theta(u,u)-S_\theta(u,r)\bigr|
\ \le\ \frac{2}{\gamma}\,\frac1T\sum_{t=1}^{T}\|p_t^{\,u}-p_t^{\,r}\|_{TV}.
\]
Apply Lemma~\ref{lem:tv_scaling} followed by Lemma~\ref{lem:TV_JS} and Jensen's inequality:
\[
\frac1T\sum_{t}\|p_t^{\,u}-p_t^{\,r}\|_{TV}
\ =\ \frac{1}{1-\alpha}\,\frac1T\sum_{t}\|p_t^{\,d}-p_t^{\,r}\|_{TV}
\ \le\ \frac{\sqrt{2}}{1-\alpha}\,\sqrt{\frac1T\sum_{t}\JS(p_t^{\,d},p_t^{\,r})}.
\]
Combining yields the claim.
\end{proof}

\subsection{Generalization of Theorem \ref{thm:self_perp_gap} to unlearn set neighborhood}\label{a:neighbor_perp_gap}

Here, we show that the self-perplexity gap guarantee provided by Theorem \ref{thm:self_perp_gap} can be generalized to a neighborhood of $u$ rather than $u$ itself:

\begin{thm}[MarI controls neighborhood-perplexity gap]\label{thm:neighbor_perp_gap}
Draw $U:=\{U_t\}_{t=1}^T$ with $U_t\sim p_t^{\,u}$ independently across $t \in [T]$ and suppose $\max_{t,x}\,\big\{\frac{p_t^{\,u}(x)}{p_t^{\,r}(x)}\!\vee\!\frac{p_t^{\,r}(x)}{p_t^{\,u}(x)}\big\} =: M < \infty$. Let $C:=\max_{t,x:\,p_t^{\,u}(x)>0}\bigl[\log\frac{p_t^{\,r}(x)}{p_t^{\,u}(x)}\bigr]^2 < \infty$. Then, for any $\epsilon>0$, with probability at least $1-2\exp\bigl(-T\epsilon^2/(2C)\bigr)$,
\[
\bigl|S_\theta(U,u)-S_\theta(U,r)\bigr|
\ \le\ \Bigl(\log M\Bigr)\frac{M}{M-1}\,
\frac{\sqrt{2}}{1-\alpha}\ \sqrt{\,I(X_{\MarI};Z)\,}\ +\ \epsilon.
\]
\end{thm}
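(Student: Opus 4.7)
The plan is to rewrite the score gap as an empirical mean of bounded independent log-ratios, apply Hoeffding's inequality to isolate the stochastic deviation (producing the $\epsilon$ term and the stated tail probability), and then bound the remaining expected KL-term by $\sqrt{I(X_{\MarI};Z)}$ deterministically through a short chain of divergence inequalities.

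By definition of $S_\theta$, and using $p_\theta(\cdot\mid u_{<t})=p_t^{\,u}$, $p_\theta(\cdot\mid r_{<t})=p_t^{\,r}$,
\[
S_\theta(U,u)-S_\theta(U,r)\ =\ \frac{1}{T}\sum_{t=1}^{T}\log\frac{p_t^{\,r}(U_t)}{p_t^{\,u}(U_t)}\ =:\ \frac{1}{T}\sum_{t=1}^{T}Z_t.
\]
Because the $U_t\sim p_t^{\,u}$ are independent across $t$, the $Z_t$ are independent with $\mathbb{E}[Z_t]=-\KL(p_t^{\,u}\|p_t^{\,r})$ and, by the definition of $C$, $|Z_t|\le\sqrt{C}$ deterministically. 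Hoeffding's inequality applied to independent summands in $[-\sqrt{C},\sqrt{C}]$ yields
\[
\Pr\!\Bigl(\bigl|\tfrac{1}{T}\sum_t Z_t-\mathbb{E}\tfrac{1}{T}\sum_t Z_t\bigr|>\epsilon\Bigr)\ \le\ 2\exp\!\bigl(-T\epsilon^2/(2C)\bigr).
\]
By the triangle inequality, it therefore suffices to bound $\tfrac{1}{T}\sum_t\KL(p_t^{\,u}\|p_t^{\,r})$ by $(\log M)\tfrac{M}{M-1}\tfrac{\sqrt{2}}{1-\alpha}\sqrt{I(X_{\MarI};Z)}$ deterministically.

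The core deterministic step is the following sharp KL--TV inequality: for probability laws $p,q$ on $V$ with $\max_x p/q\le M$,
\[
\KL(p\|q)\ \le\ \tfrac{M\log M}{M-1}\,\mathrm{TV}(p,q).
\]
I would prove this by writing $\KL(p\|q)=\sum_x q(x)\,f\!\bigl(p(x)/q(x)\bigr)$ with $f(t):=t\log t$: convexity of $f$ together with $f(1)=0$ and $f(M)=M\log M$ yields the chord bound $f(t)\le(t-1)\tfrac{M\log M}{M-1}$ on $[1,M]$, while $f(t)\le 0$ on $[0,1]$; splitting the sum at $\{x:p(x)\ge q(x)\}$ and using $\sum_{p\ge q}(p-q)=\mathrm{TV}(p,q)$ gives the claim. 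Instantiating at $(p,q)=(p_t^{\,u},p_t^{\,r})$, combining with the identity $\mathrm{TV}(p_t^{\,u},p_t^{\,r})=\tfrac{1}{1-\alpha}\mathrm{TV}(p_t^{\,d},p_t^{\,r})$ (which follows from $p_t^{\,d}-p_t^{\,r}=(1-\alpha)(p_t^{\,u}-p_t^{\,r})$) and the Pinsker-for-$\JS$ inequality $\mathrm{TV}(p,q)\le\sqrt{2\,\JS(p,q)}$ (apply classical Pinsker to $\KL(p\|m)$ and $\KL(q\|m)$ with $m=\tfrac{p+q}{2}$ and sum the two halves of $\JS$), yields $\KL(p_t^{\,u}\|p_t^{\,r})\le\tfrac{M\log M}{(M-1)(1-\alpha)}\sqrt{2\,\JS(p_t^{\,d},p_t^{\,r})}$. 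Averaging over $t$ and applying Jensen's inequality for $\sqrt{\cdot}$ to pull the average inside the square root turns $\tfrac{1}{T}\sum_t\sqrt{\JS(p_t^{\,d},p_t^{\,r})}$ into $\sqrt{I(X_{\MarI};Z)}$ via \eqref{eq:Ihat_equals_avgJS}, completing the bound.

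The main obstacle I anticipate is the sharp KL--TV inequality with constant $\tfrac{M\log M}{M-1}$. The naive symmetric bound $\KL(p\|q)+\KL(q\|p)\le 2(\log M)\,\mathrm{TV}(p,q)$ gives a constant like $2\log M$ and loses the $\tfrac{M}{M-1}$ factor, so achieving the stated constant requires discarding the nonpositive $\{p<q\}$ contribution and exploiting the one-sided chord inequality above. The other ingredients (Hoeffding for bounded independent sums, Pinsker for $\JS$, Jensen for $\sqrt{\cdot}$) are standard, and assemble by the triangle inequality into the stated high-probability bound.
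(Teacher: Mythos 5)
Your proposal matches the paper's proof essentially step for step: the same log-ratio random variable $Y_t=\log\frac{p_t^{\,r}(U_t)}{p_t^{\,u}(U_t)}$, the same Hoeffding application with bound $\sqrt{C}$, the same one-sided pointwise KL bound $\KL(p\|q)\le\frac{M\log M}{M-1}\,\mathrm{TV}(p,q)$ obtained by discarding the nonpositive $\{p<q\}$ contribution, followed by the paper's TV-scaling identity, the JS-Pinsker inequality, and Jensen. The only cosmetic difference is how the chord inequality for $f(t)=t\log t$ is justified---you invoke convexity of $f$ directly, while the paper shows the slope function $g(y)=\frac{y\log y}{y-1}$ is increasing and evaluates it at $M$---but these are equivalent and yield the same constant.
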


We start with the following three lemmata that are needed for the proof of Theorem~\ref{thm:neighbor_perp_gap}: 

\begin{lem}[Point-wise KL bound]
\label{lem:KL_pointwise}
Let $p,q$ be two probability distributions over a finite set $V$ such that $\frac{p(x)}{q(x)}\;\in\;[1,M]\text{ for every }x\in V$
for some constant $M>1$.  Then for every $x\in V$
\begin{equation}
\label{eq:KL_pointwise_bound}
      p(x)\,\log\frac{p(x)}{q(x)}
      \;\le\;
      \,(\log M)\frac{M}{M-1}\,\bigl[p(x)-q(x)\bigr].
\end{equation}
\end{lem}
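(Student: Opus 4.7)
The plan is to reduce the pointwise inequality to a scalar convexity statement in a single variable. Setting $r := p(x)/q(x) \in [1,M]$ (when $q(x)=0$ the hypothesis forces $p(x)=0$ and both sides of \eqref{eq:KL_pointwise_bound} vanish under the convention $0\log 0 = 0$), I would rewrite $p(x) = r\, q(x)$ and divide both sides of the inequality by $q(x) > 0$. The claim then reduces to showing that
\[
 r\log r \;\le\; (\log M)\,\frac{M}{M-1}\,(r-1) \quad \text{for every } r \in [1,M].
\]

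Next, I would argue by convexity. The function $g(r) := r\log r$ is strictly convex on $(0,\infty)$ since $g''(r) = 1/r > 0$. The right-hand side of the displayed inequality is the affine function passing through the two points $(1, g(1)) = (1, 0)$ and $(M, g(M)) = (M, M\log M)$: its slope equals $\tfrac{M\log M - 0}{M-1} = \tfrac{M \log M}{M-1}$, which matches the prefactor $(\log M)\tfrac{M}{M-1}$, and it evaluates to $0$ at $r=1$. Since $g$ is convex, its graph lies weakly below the secant line joining these two endpoints on the interval $[1,M]$, which delivers the desired bound.

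The main (and essentially only) subtlety is in recognizing that the asymmetric-looking constant $(\log M)\tfrac{M}{M-1}$ is not arbitrary: it is precisely the secant slope, chosen so that equality holds at both endpoints $r = 1$ and $r = M$. As a by-product, one sees that the bound is tight in the sense that no smaller linear function of $r-1$ can dominate $r\log r$ on the entire interval $[1,M]$. There is no substantive obstacle; the lemma is an elementary calculus fact once the reduction to the single-variable convexity argument is made, and the degenerate cases $p(x)=0$ or $q(x)=0$ are absorbed by the standard conventions on relative entropy.
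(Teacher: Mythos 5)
Your proof is correct, but it takes a different (and arguably cleaner) route than the paper's. Both of you reduce the claim to the scalar inequality $r\log r \le \frac{M\log M}{M-1}(r-1)$ on $[1,M]$ after substituting $r := p(x)/q(x)$. The paper then considers the quotient $g(y) = \frac{y\log y}{y-1}$ (with $g(1)$ defined by the limit), computes $g'(y) = \frac{(y-1)-\log y}{(y-1)^2} > 0$, concludes $g$ is increasing, and evaluates at $y = M$. You instead observe that $r\log r$ is convex (second derivative $1/r > 0$) and that the right-hand side is exactly the secant line through $(1,0)$ and $(M, M\log M)$, so the bound follows immediately from convexity. Your argument avoids both the removable singularity at $y=1$ and the auxiliary inequality $\log y < y-1$, and it makes the structure of the constant transparent as a secant slope, along with the observation that equality holds at both endpoints. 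The paper's approach is more computational but equally elementary. One small nit: the hypothesis $p(x)/q(x) \in [1,M]$ already forces $q(x) > 0$, so the degenerate case $q(x)=0$ you mention cannot arise under the stated assumptions; that remark is harmless but unnecessary.
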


\begin{proof}
Fix $x\in V$ and set $y:=\dfrac{p(x)}{q(x)}\in[1,M]$.  
Inequality \eqref{eq:KL_pointwise_bound} is equivalent to  
\begin{equation*}
    y\log y \;\le\; \frac{M}{M-1}\,(\log M)\,(y-1),
      \qquad\forall\,y\in[1,M].
\tag{\ref*{eq:KL_pointwise_bound}}\label{eq:star}
\end{equation*}

For $y>1$ let $g(y)\;:=\;\frac{y\log y}{y-1}$ and set $g(1):=\lim_{y\to1^{+}}g(y)=1$. We show that $g$ is strictly increasing on $[1,M]$. Indeed, compute $g'(y) =\frac{(y-1)-\log y}{(y-1)^2}.$
Since $\log y < y-1$ for all $y>1$, we have $g'(y)>0$; thus, $g$ is strictly increasing. Because $g$ is increasing and $y\in[1,M]$, we have $$ g(y)\;\le\;g(M) =\frac{M\log M}{M-1}.$$
Multiplying both sides by $y-1$ yields \eqref{eq:star}, which is precisely \eqref{eq:KL_pointwise_bound} after reinstating $y=p(x)/q(x)$. Therefore, \eqref{eq:KL_pointwise_bound} holds for every $x\in V$. This completes the proof.
\end{proof}

\begin{lem} {\bf (Total Variation is controlled by Jensen-Shannon Divergence)}
\label{lem:TV_JS}
For any two probability measures \(p,q\) on a finite set, we have $$\|p-q\|_{TV}\;\le\;\sqrt{2\,\JS(p,q)},$$ 
where $\JS(p,q) := \frac12\KL(p\|m)+\frac12\KL(q\|m)$, $m:=\frac{p+q}{2}$ and $\KL(p\|q)\ := \sum_{v} p(v)\log\frac{p(v)}{q(v)}$, denotes the Jensen–Shannon divergence.
\end{lem}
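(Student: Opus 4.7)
The plan is to prove the bound by invoking Pinsker's inequality at the midpoint measure $m := (p+q)/2$. Pinsker's inequality states that for any probability measures $\mu,\nu$ on a finite set, $\|\mu-\nu\|_{TV} \le \sqrt{\tfrac12\,\KL(\mu\|\nu)}$, and this will be the only external fact I rely on (it is standard and can be cited directly).

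First I would record the key algebraic identity that makes $m$ special: since $p - m = \tfrac12(p-q)$ and $q - m = \tfrac12(q-p)$, we have
\begin{equation*}
\|p-m\|_{TV} \;=\; \|q-m\|_{TV} \;=\; \tfrac12\,\|p-q\|_{TV}.
\end{equation*}
Next, I would apply Pinsker separately to the pairs $(p,m)$ and $(q,m)$, obtaining
\begin{equation*}
\tfrac14\,\|p-q\|_{TV}^{2} \;\le\; \tfrac12\,\KL(p\|m), \qquad \tfrac14\,\|p-q\|_{TV}^{2} \;\le\; \tfrac12\,\KL(q\|m).
\end{equation*}
Adding the two inequalities and using the definition $\JS(p,q) = \tfrac12\KL(p\|m) + \tfrac12\KL(q\|m)$ yields $\tfrac12\,\|p-q\|_{TV}^{2} \le \JS(p,q)$, and taking square roots gives the claimed bound $\|p-q\|_{TV} \le \sqrt{2\,\JS(p,q)}$.

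There is essentially no obstacle once Pinsker is invoked: the whole argument is two lines of algebra around the midpoint identity. The only subtlety is making sure $\KL(p\|m)$ and $\KL(q\|m)$ are well-defined, which follows from absolute continuity $p,q \ll m$ (automatic since $m = (p+q)/2$ dominates both). I would mention this briefly to keep the proof self-contained. If one prefers to avoid citing Pinsker, an alternative route would be to use the joint convexity of KL or a direct elementary inequality such as $3(a-b)^2 \le (2a+4b)(a\log(a/m) + b\log(b/m))$ at each atom, but the midpoint-plus-Pinsker route is the cleanest and matches the style of the surrounding appendix.
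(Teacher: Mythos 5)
Your proposal is correct and follows essentially the same route as the paper's proof: apply Pinsker's inequality to the pairs $(p,m)$ and $(q,m)$ at the midpoint $m=\tfrac{p+q}{2}$, use the identity $p-m=\tfrac12(p-q)$, and combine; the only difference is that the paper does the bookkeeping in the $\ell_1$ norm and converts to total variation at the end, which is cosmetic.
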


\begin{proof}
Let \(m=\tfrac{p+q}{2}\).  Pinsker’s inequality gives
\(
   \|p-m\|_1^2\le 2\,\KL(p\|m)
\)
and analogously for \(q\).
Hence
\[
   \JS(p,q)
   \;\geq\;\tfrac14\!\Bigl[\|p-m\|_1^2+\|q-m\|_1^2\Bigr]
   \;=\;\tfrac18\,\|p-q\|_1^2,
\]
because \(p-m=\tfrac{p-q}{2}\) and \(q-m=-\tfrac{p-q}{2}\).
Since \(\|p-q\|_{TV} = \frac12\|p-q\|_1\), it follows that
\(
   \|p-q\|_{TV}^2\le 2\,\JS(p,q).
\)
\end{proof}

\begin{lem}[Exact TV scaling under mixture]\label{lem:tv_scaling}
If \(p^d=\alpha p^r+(1-\alpha)p^u\) with \(\alpha\in(0,1)\), then
\[
\|p^u-p^r\|_{TV}\;=\;\frac{1}{1-\alpha}\,\|p^d-p^r\|_{TV}.
\]
\end{lem}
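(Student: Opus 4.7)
The plan is to observe that this identity reduces to positive homogeneity of the total-variation norm once the mixture identity is rearranged. The whole argument consists of one algebraic manipulation followed by one norm property, so I expect no real obstacle; the only subtlety worth flagging is being explicit about which definition of $\|\cdot\|_{TV}$ is being used, so that scaling by a non-negative constant is unambiguous.

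First I would rearrange the hypothesis $p^d=\alpha p^r+(1-\alpha)p^u$ as
\[
p^d-p^r \;=\; \alpha p^r + (1-\alpha)p^u - p^r \;=\; (1-\alpha)\,(p^u - p^r).
\]
This identity holds pointwise on $V$ as an equality of signed measures (both sides integrate to $0$).

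Next I would apply the total-variation norm to both sides. Using the standard identification $\|\mu\|_{TV}=\tfrac12\|\mu\|_1$ for signed measures on a finite set, the TV norm is absolutely homogeneous: $\|c\,\mu\|_{TV}=|c|\,\|\mu\|_{TV}$ for every scalar $c$. Since $\alpha\in(0,1)$ gives $1-\alpha>0$, we obtain
\[
\|p^d-p^r\|_{TV} \;=\; (1-\alpha)\,\|p^u-p^r\|_{TV}.
\]
Dividing both sides by the strictly positive factor $1-\alpha$ yields the claimed identity. The only thing to watch out for is that $\alpha=1$ is excluded by the hypothesis $\alpha\in(0,1)$, so the division is always legal.
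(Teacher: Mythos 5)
Your proposal is correct and follows essentially the same route as the paper: both rearrange the mixture hypothesis into the identity $p^d-p^r=(1-\alpha)(p^u-p^r)$ and then invoke homogeneity of the ($\ell_1$-based) total-variation norm, with division by the positive factor $1-\alpha$. Your added remarks about the normalization convention and the exclusion of $\alpha=1$ are fine but do not change the argument.
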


\begin{proof}
\(p^d-p^r=(1-\alpha)(p^u-p^r)\). Taking \(\ell_1\)-norms and dividing by \(2\) yields the identity.
\end{proof}

Now, we are ready to prove Theorem \ref{thm:neighbor_perp_gap}:

\begin{proof}
Define \(Y_t:=\log\frac{p_t^{\,r}(U_t)}{p_t^{\,u}(U_t)}\), so that
\[
S_\theta(U,u)-S_\theta(U,r)\;=\;\frac1T\sum_{t=1}^{T}Y_t.
\]
Since \(U_t\sim p_t^{\,u}\),
\(\mathbb E[Y_t]=\sum_x p_t^{\,u}(x)\log\frac{p_t^{\,r}(x)}{p_t^{\,u}(x)}= -\,\KL(p_t^{\,u}\|p_t^{\,r})\),
hence
\[
\mathbb E\!\bigl[S_\theta(U,u)-S_\theta(U,r)\bigr]=-\frac1T\sum_{t=1}^{T}\KL(p_t^{\,u}\|p_t^{\,r}).
\]
Now, by the assumption $\max_{t,x}\max\!\Bigl\{\frac{p_t^{\,u}(x)}{p_t^{\,r}(x)},\frac{p_t^{\,r}(x)}{p_t^{\,u}(x)}\Bigr\}\ \le\ M$, we have $p_t^r(x) > 0$ for $p^u_t(x)$-a.e. $x$ for all $t$. Therefore, for all $t$, we have $\log\frac{p_t^{\,r}(x)}{p_t^{\,u}(x)} < \infty$ and taking the maximum over $t \in [T]$, we obtain $C\ :=\ \max_{t,\;x:\,p_t^{\,u}(x)>0}\Bigl[\log\frac{p_t^{\,r}(x)}{p_t^{\,u}(x)}\Bigr]^2\ <\ \infty$. It then follows from the definition of $Y_t$ that \(|Y_t|\le\sqrt{C}\) a.s..
Hoeffding’s inequality for independent bounded variables yields, for any \(\epsilon>0\),
\[
\mathbb P\!\left(\left|\frac1T\sum_{t=1}^{T}Y_t-\mathbb E\frac1T\sum_{t=1}^{T}Y_t\right|\ge\epsilon\right)
\ \le\ 2\exp\!\left(-\frac{T\,\epsilon^2}{2C}\right).
\]
Using \(\bigl||a|-b\bigr|\le |a-b|\) for \(b\ge0\), we have
\[
\bigl|S_\theta(U,u)-S_\theta(U,r)\bigr|\ \le\ \frac1T\sum_{t=1}^{T}\KL(p_t^{\,u}\|p_t^{\,r})+\epsilon.
\]
with probability at least $1 - 2\exp\!(-\frac{T\,\epsilon^2}{2C})$.

Now, for each \(t\), let \(A_t=\{x:\,p_t^{\,u}(x)\ge p_t^{\,r}(x)\}\). Then by Lemma~\ref{lem:KL_pointwise}, we have
\[
\KL(p_t^{\,u}\|p_t^{\,r})
\ \le\ \kappa(M)\,\sum_{x\in A_t}\bigl(p_t^{\,u}(x)-p_t^{\,r}(x)\bigr)
\ \leq \ \kappa(M)\,\|p_t^{\,u}-p_t^{\,r}\|_{TV}.
\]
Averaging in \(t\) gives
\[
\frac1T\sum_{t=1}^{T}\KL(p_t^{\,u}\|p_t^{\,r})
\ \le\ \kappa(M)\,\frac1T\sum_{t=1}^{T}\|p_t^{\,u}-p_t^{\,r}\|_{TV}.
\]
Finally, it follows from Lemma~\ref{lem:tv_scaling} and Lemma~\ref{lem:TV_JS} that
\[
\frac1T\sum_{t=1}^{T}\|p_t^{\,u}-p_t^{\,r}\|_{TV}
\ =\ \frac{1}{1-\alpha}\,\frac1T\sum_{t=1}^{T}\|p_t^{\,d}-p_t^{\,r}\|_{TV}
\ \le\ \frac{\sqrt{2}}{1-\alpha}\,\frac1T\sum_{t=1}^{T}\sqrt{\JS(p_t^{\,d},p_t^{\,r})}.
\]
By Jensen’s inequality,
\(
\frac1T\sum_{t}\sqrt{\JS(p_t^{\,d},p_t^{\,r})}\le \sqrt{\frac1T\sum_{t}\JS(p_t^{\,d},p_t^{\,r})}.
\)
Combining the displays proves the claim with \(I(X_{\MarI};Z)=\frac1T\sum_t\JS(p_t^{\,d},p_t^{\,r})\).
\end{proof}

\newpage

\section{Appendix of Section 3}\label{A:Section_3}
\subsection{Theoretical error bound between position-wise vs.\ pooled MarI}\label{app:pos_vs_pooled_theory}

Here, we provide the theoretical analysis of the error between MarI and pooled MarI. The following result shows that, under a mild assumption, the error of using the pooled MarI to estimate MarI is bounded by the sequence-wise density variance:

\begin{thm}[Pooling error bound]\label{thm:pool_gap}
For each $t$, set $m_t:=\tfrac12(p_t^{\,d}+p_t^{\,r})$ and $\bar m:=\tfrac12(\bar p^{\,d}+\bar p^{\,r})$, where
$\bar p^{\,d}:=\tfrac1T\sum_{t=1}^T p_t^{\,d}$ and $\bar p^{\,r}:=\tfrac1T\sum_{t=1}^T p_t^{\,r}$.
Assume the uniform overlap condition
\begin{align}\label{eq:overlap_eq}
\beta\ :=\
\min \Bigg\{ &
\inf_{\lambda\in[0,1]}\min_{t,x}\!\bigl[(1-\lambda)\,m_t(x)+\lambda\,\bar m(x)\bigr],\\ 
&\inf_{\lambda\in[0,1]}\min_{t,x}\!\bigl[(1-\lambda)\,p_t^{\,d}(x)+\lambda\,\bar p^{\,d}(x)\bigr],
\inf_{\lambda\in[0,1]}\min_{t,x}\!\bigl[(1-\lambda)\,p_t^{\,r}(x)+\lambda\,\bar p^{\,r}(x)\bigr]
\Bigg\}\ >\ 0.
\end{align}
Define the (averaged) $\ell_2$-deviation terms
\[
V_d\ :=\ \frac1T\sum_{t=1}^T \bigl\|p_t^{\,d}-\bar p^{\,d}\bigr\|_2^{2},
\qquad
V_r\ :=\ \frac1T\sum_{t=1}^T \bigl\|p_t^{\,r}-\bar p^{\,r}\bigr\|_2^{2}.
\]
Then
\begin{equation}\label{eq:pool-gap}
0\ \le\ I(X_{\MarI};Z)\;-\;I(\bar X_{\MarI};Z)
\ \le\ \frac{1}{4\,\beta}\,\Bigl(V_d+V_r\Bigr),
\end{equation}
where $I(X_{\MarI};Z)=\tfrac1T\sum_{t=1}^T\JS(p_t^{\,d},p_t^{\,r})$ and $I(\bar X_{\MarI};Z)=\JS(\bar p^{\,d},\bar p^{\,r})$.
\end{thm}

\begin{proof}
The lower bound $I(\bar X_{\MarI};Z)\le I(X_{\MarI};Z)$ follows directly from the data-processing inequality. For the upper bound, write
\begin{align*}
    & \frac1T\sum_{t=1}^T \JS(p_t^{\,d},p_t^{\,r})\;-\;\JS(\bar p^{\,d},\bar p^{\,r}) \\ 
    = &\frac1T\sum_{t=1}^T\Bigl[\,
    \underbrace{H(m_t)-H(\bar m)}_{(A_t)}
    -\frac12\underbrace{\bigl(H(p_t^{\,d})-H(\bar p^{\,d})\bigr)}_{(B_t)}
    -\frac12\underbrace{\bigl(H(p_t^{\,r})-H(\bar p^{\,r})\bigr)}_{(C_t)}\Bigr].
\end{align*}
Let $\Delta_m:=m_t-\bar m$, $\Delta_d:=p_t^{\,d}-\bar p^{\,d}$, $\Delta_r:=p_t^{\,r}-\bar p^{\,r}$.
Since $H$ is twice-differentiable, the second-order Taylor expansion around the pooled densities yields
\[
H(a)\;=\;H(a')+\langle\nabla H(a'),a-a'\rangle+\tfrac12(a-a')^\top \nabla^2H(s)(a-a'),
\]
for some $s$ on the line segment between $a'$ and $a$. Note that $H$ is concave, $\nabla^2H(s)$ is negative semidefinite and diagonal with entries $-1/s(x)$.
By the overlap assumption \eqref{eq:overlap_eq}, every coordinate along the segments between $m_t$ and $\bar m$ and between $p_t^{\,s}$ and $\bar p^{\,s}$ ($s\in\{d,r\}$) is at least $\beta$, hence
\[
-\,(a-a')^\top \nabla^2H(s)(a-a')\ \le\ \frac{1}{\beta}\sum_{x\in V}\bigl(a(x)-a'(x)\bigr)^2\ \le\ \frac{1}{\beta}\,\|a-a'\|_2^2,
\]
and therefore the (negative) Taylor remainders satisfy
\begin{equation}\label{eq:rem-bound}
H(a)-H(a')-\langle\nabla H(a'),a-a'\rangle\ \ge\ -\,\frac{1}{2\beta}\,\|a-a'\|_2^2.
\end{equation}
Applying \eqref{eq:rem-bound} to the three entropy differences and averaging in $t$, the first-order terms vanish because $\frac1T\sum_t\Delta_d=0$, $\frac1T\sum_t\Delta_r =0$, and $\frac1T\sum_t\Delta_m =0$. Thus,
\begin{align*}
    \frac1T\sum_t \bigl[(A_t)-(B_t)/2-(C_t)/2\bigr]
    \le & \frac1T\sum_t\frac{1}{2}\Bigl(-\mathcal{R}_d(t)\Bigr) \;+\; \frac1T\sum_t\frac{1}{2}\Bigl(-\mathcal{R}_r(t)\Bigr) -\frac1T\sum_t \mathcal{R}_m(t)\\
    \le & -\frac{1}{2} \Big( \frac1T\sum_t\Bigl(\mathcal{R}_d(t)\Bigr) \;+\; \frac1T\sum_t\Bigl(\mathcal{R}_r(t)\Bigr)\Big)\\
    \le & \frac{1}{4\beta} \Big( \frac1T\sum_t\Bigl(||p^d_t - \bar{p}^d||_2^2\Bigr) \;+\; \frac1T\sum_t\Bigl(||p^r_t - \bar{p}^r||_2^2\Bigr)\Big)\\
     = & \frac{1}{4\beta}\,(V_d+V_r).
\end{align*}
where $\mathcal{R}_s(t):=H(p_t^s)-H(\bar{p}^s)-\langle\nabla H(\bar{p}^s),p_t^s-\bar{p}^s\rangle\le 0$ for $s \in \{d,r\}$ and $\mathcal{R}_m(t):=H(m_t)-H(\bar{m})-\langle\nabla H(\bar{m}),m_t-\bar{m}\rangle\le 0$.
\end{proof}

\subsection{Theoretical Guarantee provided by pooled MarI}\label{app:pooled_MarI_guarantee}

In general, there is \emph{no non-trivial (distribution–free)} theoretical guarantee for the sequence–level perplexity gaps can be stated solely in terms of the pooled MarI \(I(\bar X_{\MarI};Z)=\JS(\bar p^{\,d},\bar p^{\,r})\) without any additional assumption, such as the bounded $\ell^2$-deviation in Theorem \ref{thm:pool_gap} above.

Nonetheless, the pooled MarI \emph{can} certify the following \emph{word–level forgetting} (for particular tokens) forgetting:

\begin{thm}[Word-level provable unlearning via pooled MarI]
Fix a token \(w\in V\) and assume \(\bar p^{\,u}(w)\wedge \bar p^{\,r}(w) =: \bar\gamma_w\in(0,1]\).
Then
\[
\Big|\,\log \bar p^{\,u}(w)-\log \bar p^{\,r}(w)\,\Big|
\;\le\; \frac{2}{\bar\gamma_w\,(1-\alpha)}\;\sqrt{\,2\,I(\bar X_{\MarI};Z)\,}.
\]
\end{thm}

\begin{proof}
It follows from mean value theorem for \(x\mapsto \log x\) on \([\bar\gamma_w,1]\) that $$|\log \bar p^{\,u}(w)-\log \bar p^{\,r}(w)|\le |\bar p^{\,u}(w)-\bar p^{\,r}(w)|/\bar\gamma_w.$$
Since \(\bar p^{\,u}-\bar p^{\,r}=(\bar p^{\,d}-\bar p^{\,r})/(1-\alpha)\), we have
\[
|\bar p^{\,u}(w)-\bar p^{\,r}(w)|
\;\le\; \|\bar p^{\,u}-\bar p^{\,r}\|_1
=\frac1{1-\alpha}\,\|\bar p^{\,d}-\bar p^{\,r}\|_1
=\frac{2}{1-\alpha}\,\|\bar p^{\,d}-\bar p^{\,r}\|_{TV}.
\]
Finally, combining the above two inequalities, we have
\begin{equation*}
    |\log \bar p^{\,u}(w)-\log \bar p^{\,r}(w)|\le \frac{2}{\bar\gamma_w(1-\alpha)}\,\|\bar p^{\,d}-\bar p^{\,r}\|_{TV} \le \frac{2}{\bar\gamma_w(1-\alpha)}\,\sqrt{2\,I(\bar X_{\MarI};Z)},
\end{equation*}
where the last inequality follows from the fact that
\begin{equation*}
    \|\bar p^{\,d}-\bar p^{\,r}\|_{TV}\le \sqrt{2\,\JS(\bar p^{\,d},\bar p^{\,r})}=\sqrt{2\,I(\bar X_{\MarI};Z)}.
\end{equation*}
\end{proof}

\subsection{Empirical error bound between position-wise vs.\ pooled MarI}\label{app:pos_vs_pooled_empirical}

We empirically compare the token/position-wise MarI, $I(X_{\MarI};Z)=\tfrac{1}{T}\sum_{t=1}^{T} I(X_t;Z)$, with the pooled (``flattened'') MarI, $I(\bar X_{\MarI};Z)$, on our heterogeneous dataset. As predicted by the data–processing inequality, $I(\bar X_{\MarI};Z)\le I(X_{\MarI};Z)$, so the position-wise estimator produces a stronger marginal-information signal. Nevertheless, by appropriately tuning the trade-off parameter $\gamma$ (weighting MarI vs.\ utility), both estimators attain comparable forget–utility trade-offs.

\begin{figure}[h]
  \centering
    \includegraphics[width=0.5\textwidth]{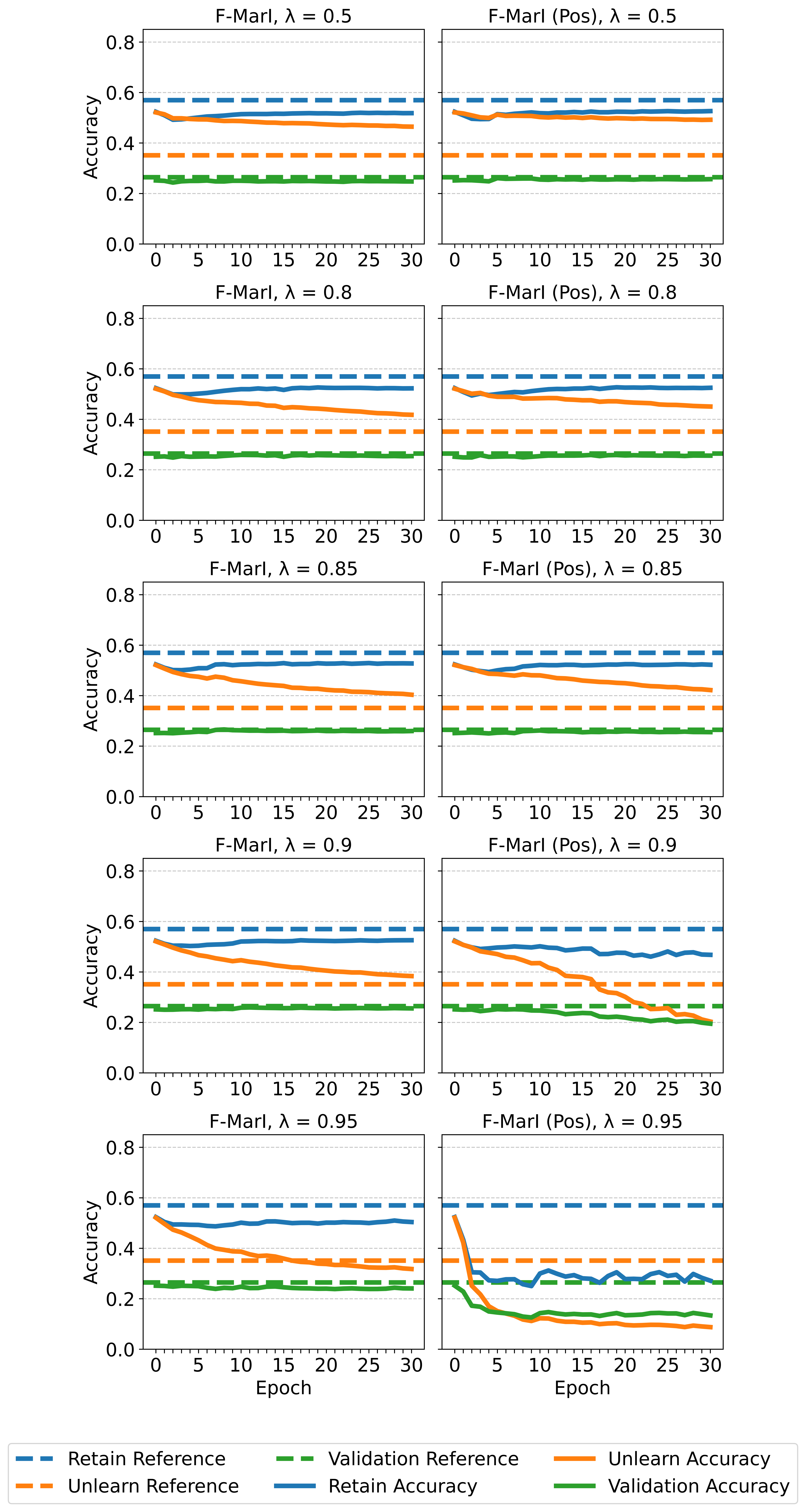}
  \caption{Position-wise vs.\ pooled MarI under several $\lambda$ settings using Llama models.}
  \label{fig:comparison_pos_pooled}
\end{figure}

However, we can also observe the influence of the heterogeneity of dataset and random batch sampling. In particular, in Figure \ref{fig:comparison_pos_pooled}, the position-wise estimator exhibits higher variance on heterogeneous batches (varying lengths, topics, and token alignments). Furthermore, Figure \ref{fig:comparison_fixed09} shows that, with fixed $\gamma$ (e.g., $\gamma=0.9$), the position-wise MarI tends to over-unlearn relative to the gold unlearn baseline. Intuitively, it can over-penalize idiosyncratic, position-specific fluctuations rather than true marginal effects.

\begin{figure}[h]
    \centering
    \includegraphics[width=0.8\linewidth]{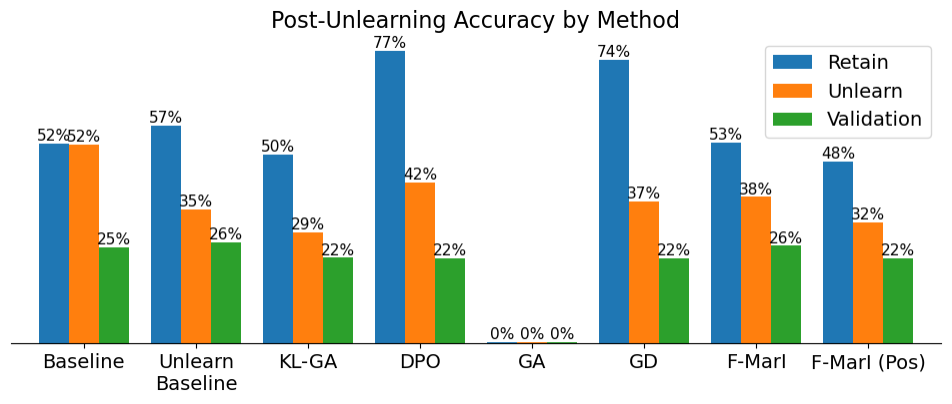}
    \caption{With a fixed trade-off ($\lambda=0.9$) on LLama models, position-wise MarI is noisier on heterogeneous data and over-unlearns compared to the unlearn baseline.}
    \label{fig:comparison_fixed09}
\end{figure}

In our experiments, because the text is heterogeneous in both length and context, we use random mini-batches and the pooled estimator by default.

\newpage

\section{Appendix of Section 4}\label{A:Section_4}
\subsection{GPU, Computation and the Algorithm}\label{appendix:gpu}
For the GPT2-LG model, all experiments use 4$\times$NVIDIA A100-40GB GPUs in fp32 precision. For all unlearning methods, we use a per-GPU batch size of 8 for both the unlearn and retain sets, yielding an effective batch size of 32 examples for each set per optimization step.

\begin{table}[H]
    \centering
    \begin{tabular}{lccc}
    \toprule
    Method  & Time per batch (s / batch) & Peak memory (GB / GPU) \\
    \midrule
    F-MarI  & 0.56 & 34.19 \\
    KL-GA   &  0.50 & 31.22 \\
    GD      &  0.45 & 27.15 \\
    DPO     & 0.57 & 36.35 \\
    GA      &  0.32 & 28.42 \\
    \bottomrule
    \end{tabular}
    \caption{Compute cost for different unlearning methods on GPT-2 Large.
    We report the average time per batch, and peak per-GPU memory usage.}
    \label{tab:comp_time_HP}
\end{table}

During unlearning of the CP using Llama, the batch size was 3 for the unlearn set and 6 for the retain set.  All experiments were conducted using 2 × NVIDIA GRID T4-16Q 16GB GPUs at maximum memory utilization.

\begin{table}[H]
    \centering
    \begin{tabular}{lccc}
    \toprule
    Method  & Time per batch (s / batch) & Peak memory (GB / GPU) \\
    \midrule
    F-MarI  & 1.95 & $\sim$16 \\
    KL-GA   &  1.93 & $\sim$16 \\
    GD      &  1.50 & $\sim$16 \\
    DPO     & 1.76 & $\sim$16 \\
    GA      &  0.63 & $\sim$16 \\
    \bottomrule
    \end{tabular}
    \label{tab:comp_time_CP}
\end{table}


\subsection{Full Algorithm and Flow Chart}

Here, we first provide both the full-detailed algorithm pseudo-code for Forgetting-MarI and the flowchart for readers who are more familiar with chart presentations.

\begin{figure}[h]
  \centering
\includegraphics[width=1.0\linewidth]{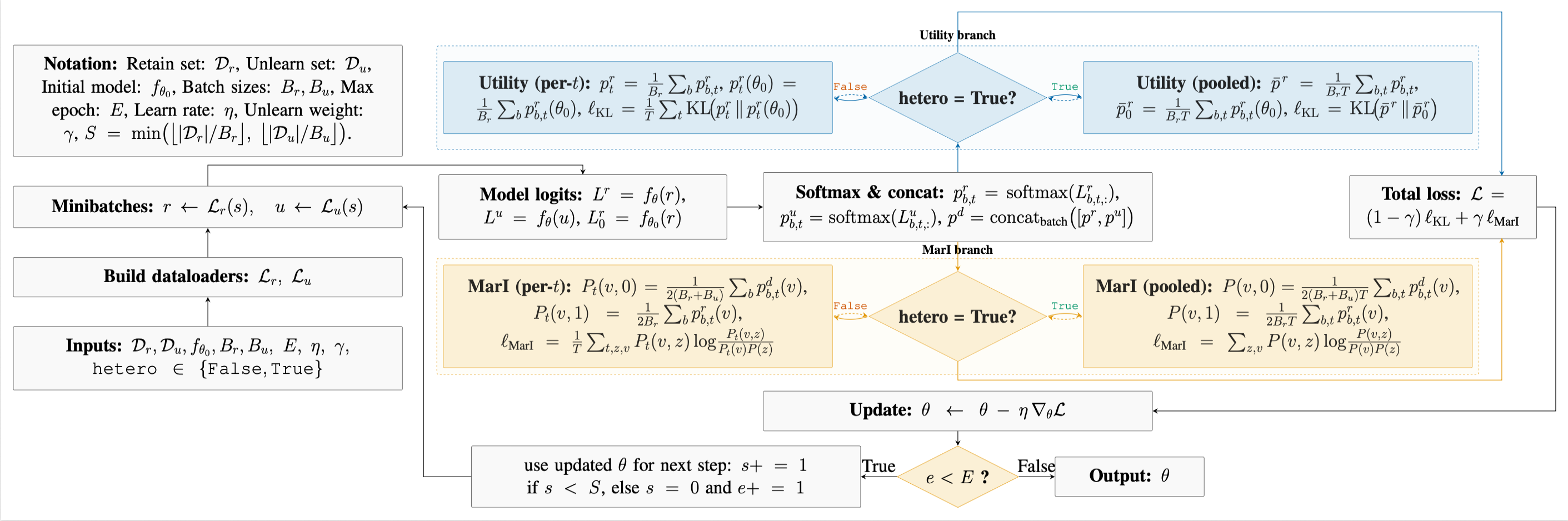}
  \caption{Flow chart for Forgetting-MarI.}
  \label{fig: flow_chat}
\end{figure}

\begin{figure}[H]
  \centering
  \vspace{-5pt}
    \captionof{algorithm}{Forgetting-MarI.}
\includegraphics[width=1.0\linewidth]{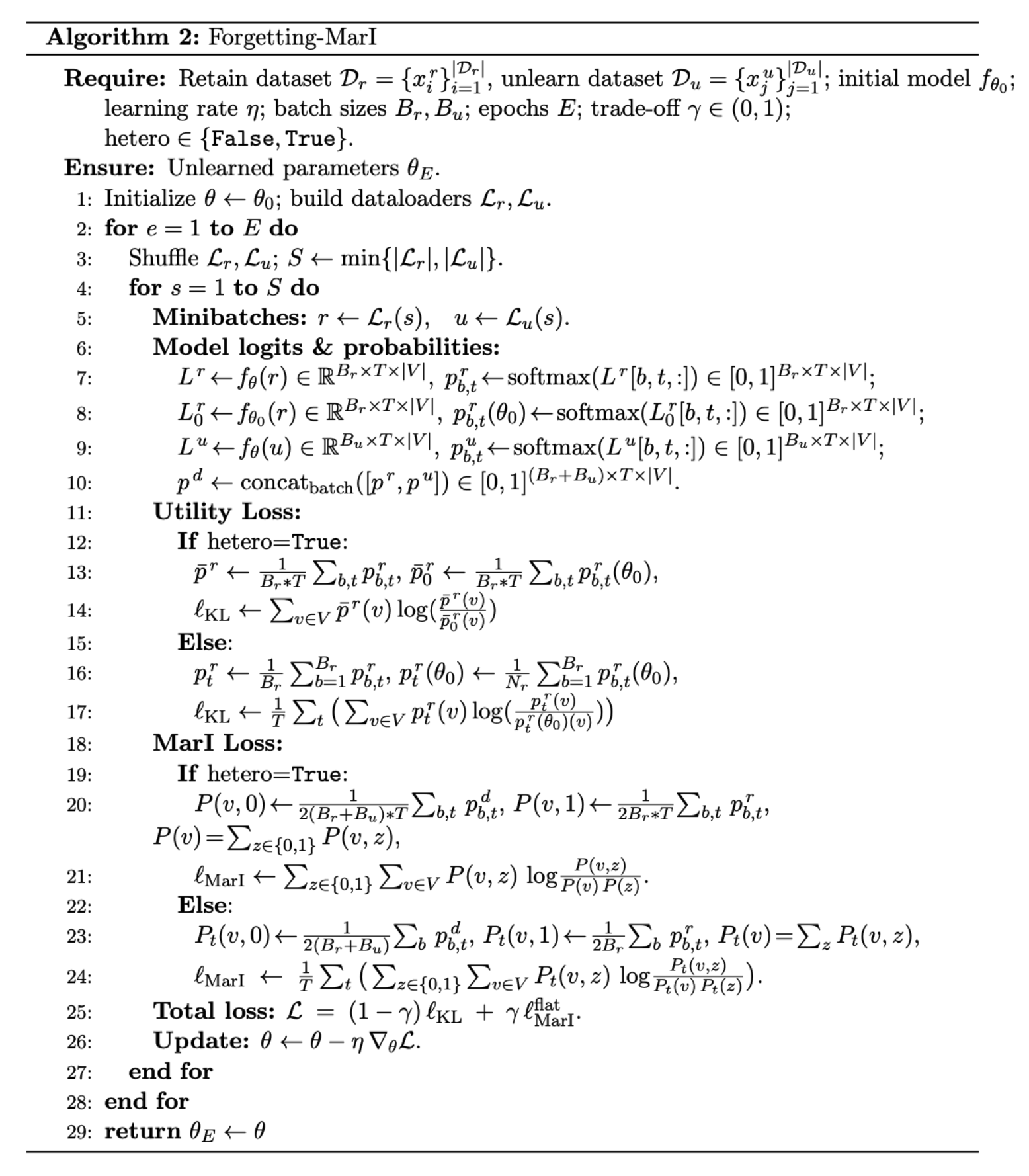}
  \label{alg:point_reg}
\end{figure}

Here, we note that, in practice, encoding often introduces padding tokens, and one should ignore those for downstream calculations: $X_R^{\text{flat}} \gets \operatorname{flatten} \, (L_R[i: x_i \neq \text{pad}]) \in \mathbb{R}^{N_r \times V}$; and do similarly for  $X_U^{\text{flat}}$. This ensures that probabilities derived from $L_R^{\text{flat}}$ are not biased by padding positions.

\subsection{Ablation Study for the GPT2-LG}\label{appendix:ablation}

In this ablation study, we deliberately curated the datasets to be maximally correlated, creating conditions where the distinction between forget and retain information is most difficult. This setup serves as a stress test for unlearning methods, as it requires the algorithm to selectively remove knowledge that is deeply intertwined with information that should be preserved.

\textbf{Different $\gamma$} 
Figure \ref{fig:diff_lambda} reports the training curves of all the compared full-parameter tuning UL methods using different regularization parameters $\gamma$. Due to the marginal information unlearning nature, Forgetting-MarI has the advantage that a wide range of parameter choice results in fast convergence around the unlearn baseline, indicating robust parameter-tuning. In comparison, it is clear that other methods demonstrate non-convergent (unstable) learning trajectory with extremely narrow parameter range to get close to the unlearn baseline, indicating extreme difficulty and effort in parameter-tuning.

\textbf{Learning Rate} 
Figure~\ref{fig:hp_epoch} tests a larger learning rate scenario among all methods.



These results collectively demonstrate that Forgetting-MarI provides more precise control over the unlearning process, maintaining the critical balance between forgetting targeted information and preserving general utility. The results highlights two notions of robustness:

\begin{figure}[h] 
  \centering
  \vspace{10pt}
\includegraphics[width=1.0\linewidth]{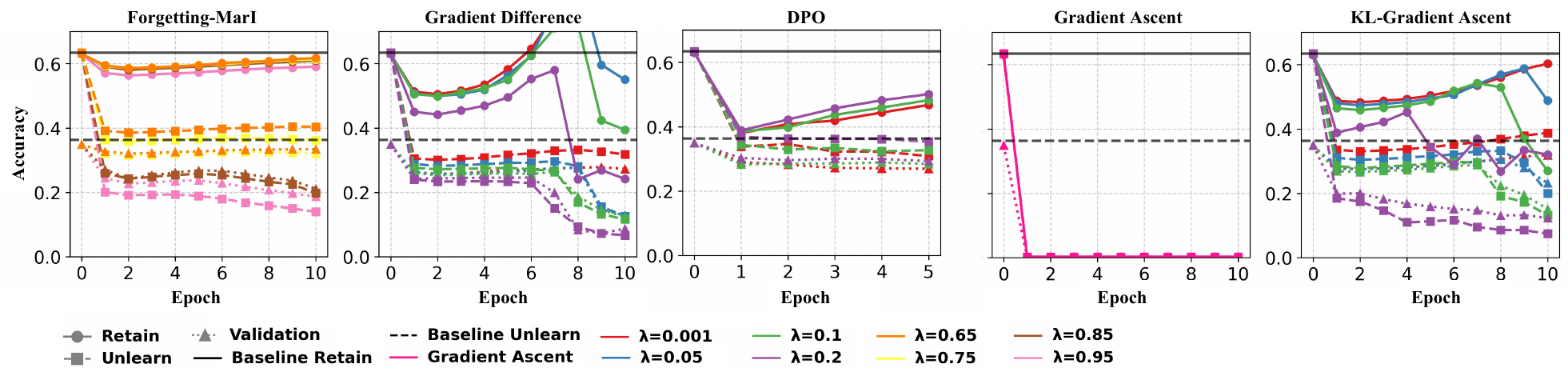}
  \caption{Training curves for each method with varying choices of the regularization parameter \(\gamma\) and lr=1e-4. Forgetting-MarI exhibits smooth monotone behavior, while the other methods show oscillation or utility collapse.}
  \vspace{10pt}
  \label{fig:hp_epoch}
\end{figure}

\begin{figure}[h]
\centering
\includegraphics[width=0.5\linewidth]{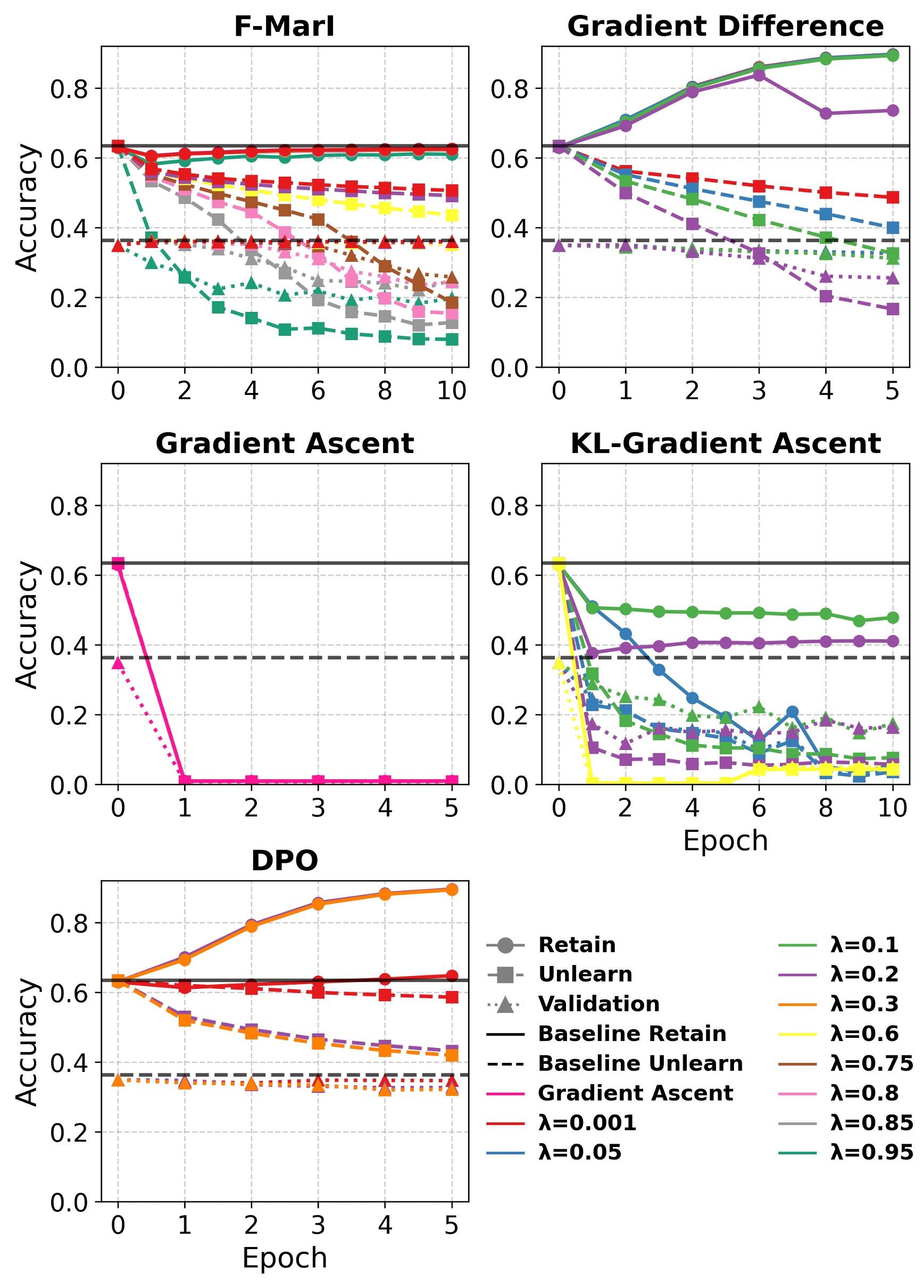}
\caption{Training Curves for full-parameter UL methods with different $\gamma$ choices with lr$=1$e-5.}
\label{fig:diff_lambda}
\end{figure}


(1) \emph{Training robustness over epochs.} Forgetting-MarI descends steadily to its optimum as seen in Fig.~\ref{fig:hp_epoch}. GA and GD overshoot and bounce, KL–GA diverges after 5–6 epochs, DPO plateaus prematurely. 

(2) \emph{Robustness against regularization tuning.} Forgetting-MarI shows a monotone and smooth utility-unlearning trade-off when adjusting the regularization parameter. In comparison, GD and KL-GA display unstable oscillations with different choices of $\gamma$.

Both training and regularization robustness are necessary for a practical use of unlearning techniques. Practitioners do not have access to ground truth baselines and have limited time to select/determine the best parameter or training epoch to stop at, so stability is essential. Forgetting-MarI is the most stable technique during unlearning, making it the safest choice in practice.

\subsection{Supplemental to Sec~\ref{sec:targeted}}\label{sec:training_curves}
\textbf{Stability of continual unlearning} Figure \ref{fig:training-curves} below shows the unlearning trajectories of different methods on GPT-2 with Harry Porter and Llama 3.2-1B with Careless People. The unlearning performance of each method over the course of unlearning, where the curves for each method correspond to the experiment with the best performing regularization parameter for each method. 

\begin{figure}[h]
  \centering
  \subcaptionbox{\label{fig:training-curves-a}}{\includegraphics[width = 1.0\linewidth]{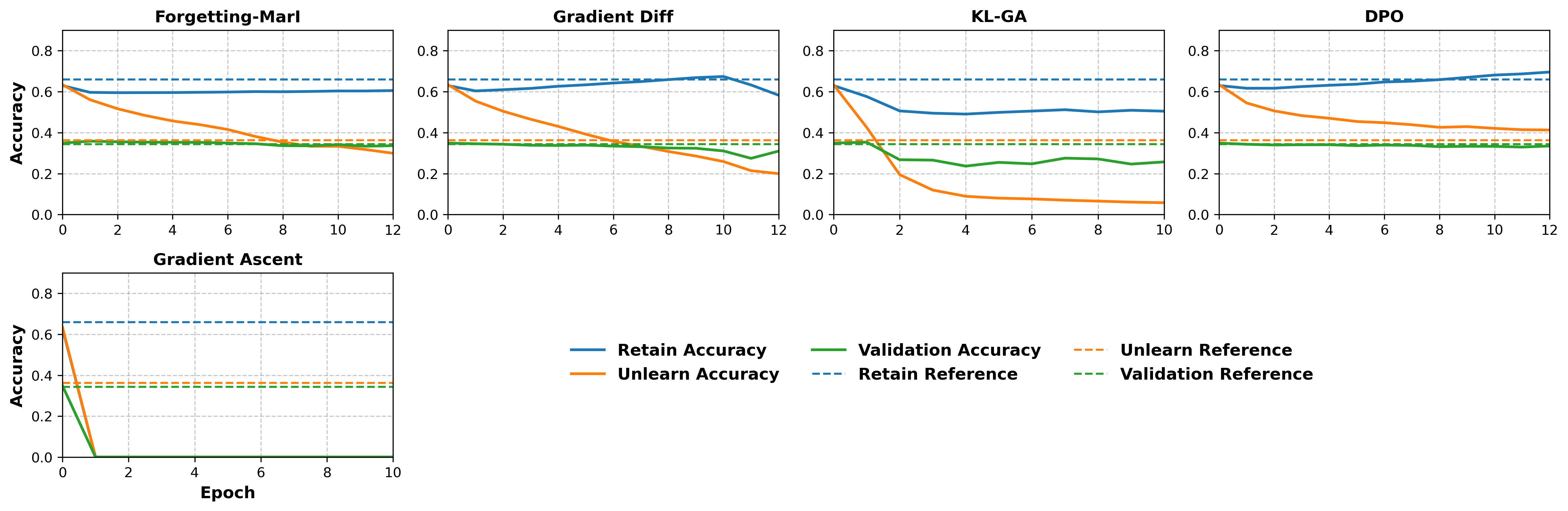}}
  \\
 \subcaptionbox{\label{fig:training-curves-b}}{\includegraphics[width = 1.0\linewidth]{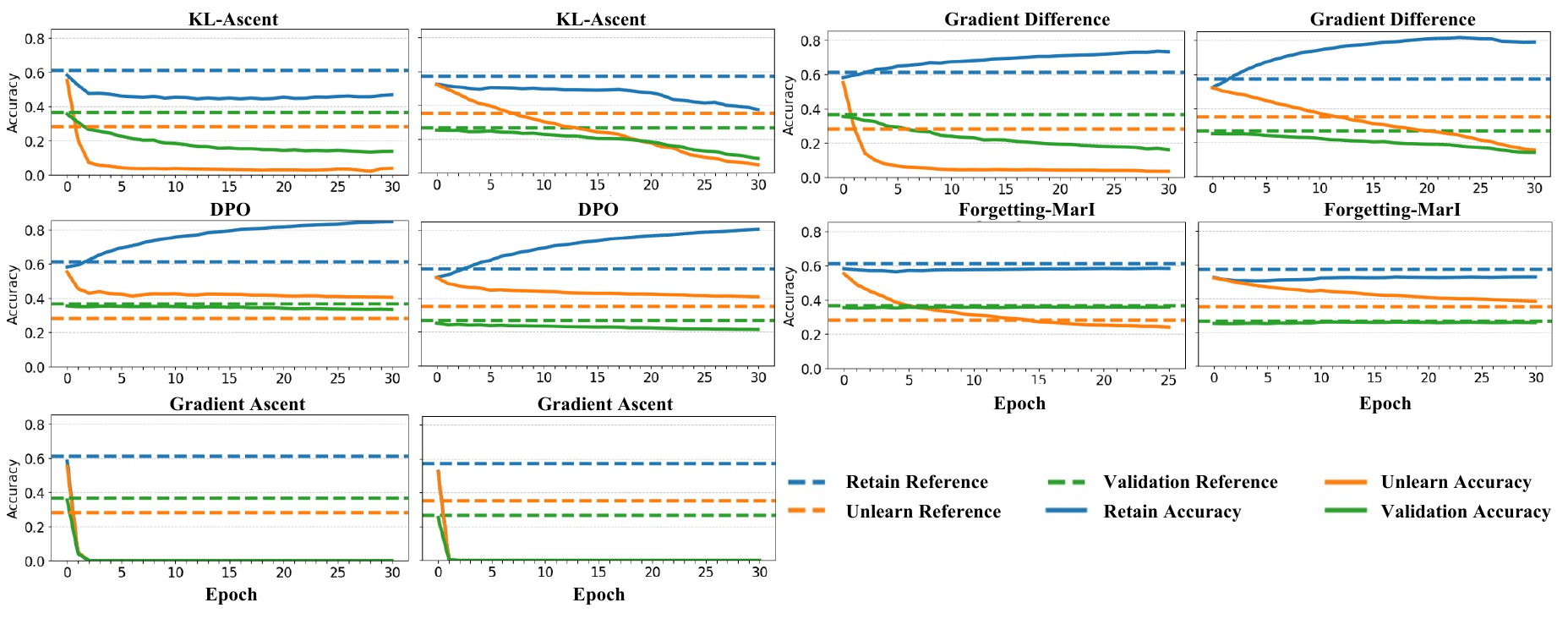}}
  \caption{Next-token prediction accuracy during unlearning training across different methods. Horizontal dashed lines represent the ``gold standard" unlearn baseline (model trained only on the retain dataset $\mathcal{D}_r$). \textbf{(a) Top:} Results for GPT-2 Large on Harry Potter, showing training dynamics across epochs (epoch 0 represents pre-unlearning performance). \textbf{(b) Bottom:} Results for Llama-3.2-1B on Careless People, with left and right columns for each method showing correlated and uncorrelated test settings, respectively.}
  \label{fig:training-curves}
\end{figure}


\begin{figure}[h]
  \centering
  \includegraphics[width=1.0\linewidth]{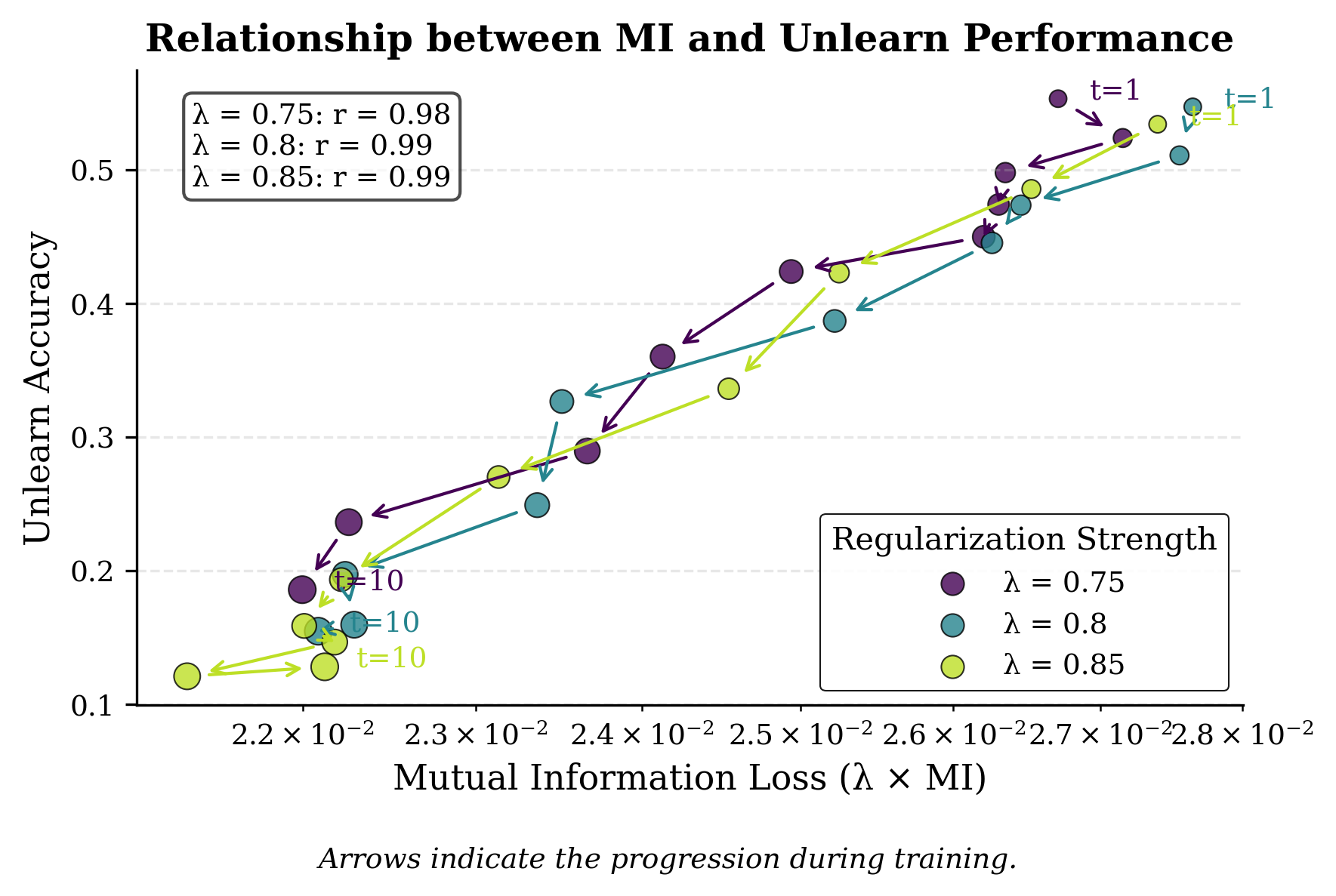}
  \caption{MI Loss for GPT-2 Large on Harry Potter Dataset. Here, we observe nearly perfect correlation between the designed marginal information regularization loss and the unlearn accuracy, under different choice of regularization parameter.)}
  \label{fig:rr1_1090_elblow}
\end{figure}

Forgetting-MarI is the best at smoothly approximating the unlearn baseline. The methods based on gradient ascent, GA, GD, and KL-GA, all over-penalize $\mathcal D_u$ due to the utility-destroying nature of gradient ascent. DPO, meanwhile, never matches the unlearn baseline in accuracy on $\mathcal D_u$ and over-trains on $\mathcal D_r$. Across both sets of experiments, Forgetting-MarI minimally affects the validation accuracy, as seen by the validation curve remaining largely unchanged.

We note that there is the possibility that, in theory, one could find a perfect balance between gradient ascent and utility regularization, leading to a stable balance between unlearning and utility preservation, using one of the other methods. However, such a balance seems practically unattainable due to the unlearning instability over time and the lack of monotonicity in the choice of $\gamma$ for methods based on gradient ascent.

\begin{figure}[h]
    \centering
\includegraphics[width=0.5\linewidth]{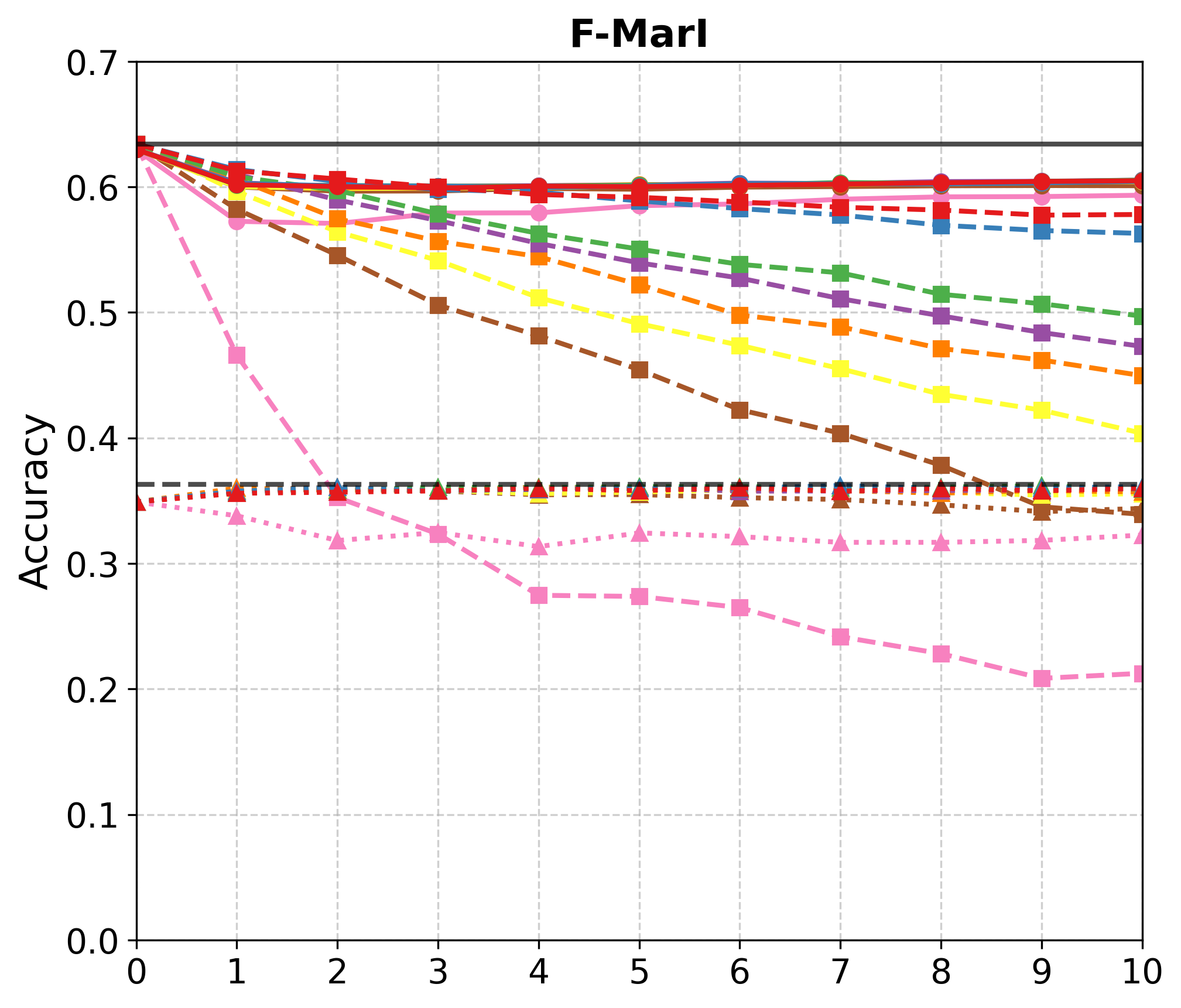}
\\
\includegraphics[width=0.7\linewidth]{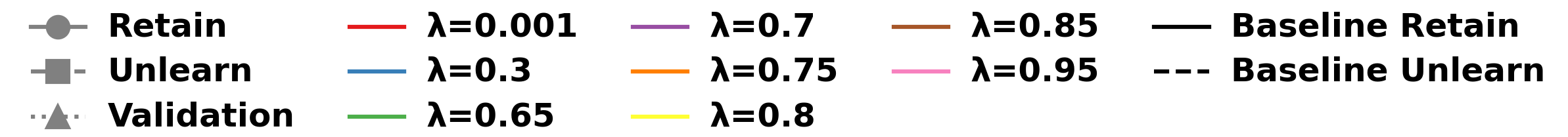}
    \caption{Training curves of F-MarI for the 10/90 split unlearning with the GPT2-LG model.}
\label{fig:10/90Robustness}
\end{figure}

Additionally, we report the training curves for various $\gamma$ for F-MarI for the HP in Figure \ref{fig:10/90Robustness}.  We also plot the proposed MI loss across a few $\gamma$.

Finally, in Figure \ref{fig:rr1_1090_elblow}, we report an observation of nearly perfect correlation between the designed marginal information regularization loss and the unlearn accuracy, under different choice of regularization parameter.

\subsection{Supplementary General Model Capacity Test Results}\label{a:general_capacity}

Table~\ref{tab:GPT_comprehensive_eval} and \ref{tab:llama_comprehensive_eval} summarize
comprehensive evaluation results across multiple
benchmark tests.

\begin{table}[hbtp]
\centering
\setlength{\tabcolsep}{3pt}
\renewcommand{\arraystretch}{1.1}
\tiny
\begin{tabularx}{\linewidth}{@{}ll*{8}{>{\centering\arraybackslash}X}@{}}
\toprule
\textbf{Task} & \textbf{Metric} & \textbf{GPT2-LG} & \textbf{Baseline} & \textbf{Unlearn Baseline} & \textbf{F-MarI}  & \textbf{KL-GA} & \textbf{GA} & \textbf{GD} & \textbf{DPO}\\
\midrule
ARC-Easy
&acc & 0.53 ± 0.01 & 0.46 ± 0.01 & 0.48 ± 0.01 & 0.46 ± 0.01 & 0.46 ± 0.01 & 0.24 ± 0.01 & 0.45 ± 0.01 & 0.46 ± 0.01\\
&acc\_norm & 0.47 ± 0.01 & 0.42 ± 0.01 & 0.43 ± 0.01 & 0.43 ± 0.01 & 0.43 ± 0.01 & 0.25 ± 0.01 & 0.43 ± 0.01 & 0.43 ± 0.01\\
\midrule
ARC-Challenge
&acc & 0.22 ± 0.01 & 0.23 ± 0.01 & 0.22 ± 0.01 & 0.23 ± 0.01 & 0.24 ± 0.01 & 0.22 ± 0.01 & 0.23 ± 0.01 & 0.23 ± 0.01\\
&acc\_norm & 0.25 ± 0.01 & 0.27 ± 0.01 & 0.27 ± 0.01 & 0.27 ± 0.01 & 0.28 ± 0.01 & 0.26 ± 0.01 & 0.27 ± 0.01 & 0.27 ± 0.01\\
\midrule
PIQA
&acc & 0.70 ± 0.01 & 0.66 ± 0.01 & 0.66 ± 0.01 & 0.66 ± 0.01 & 0.66 ± 0.01 & 0.53 ± 0.01 & 0.65 ± 0.01 & 0.65 ± 0.01\\
&acc\_norm & 0.69 ± 0.01 & 0.65 ± 0.01 & 0.65 ± 0.01 & 0.66 ± 0.01 & 0.66 ± 0.01 & 0.51 ± 0.01 & 0.64 ± 0.01 & 0.65 ± 0.01\\
\midrule
Hellaswag
&acc & 0.36 ± 0.00 & 0.36 ± 0.00 & 0.36 ± 0.00 & 0.35 ± 0.00 & 0.36 ± 0.00 & 0.25 ± 0.00 & 0.35 ± 0.00 & 0.36 ± 0.00\\
&acc\_norm & 0.45 ± 0.00 & 0.43 ± 0.00 & 0.43 ± 0.00 & 0.42 ± 0.00 & 0.42 ± 0.00 & 0.26 ± 0.00 & 0.42 ± 0.00 & 0.42 ± 0.00\\
\midrule
MMLU & acc  & 0.23 ±0.00 & 0.23 ±0.00 & 0.24 ±0.00 & 0.23 ±0.00 & 0.23 ±0.00 & 0.25 ±0.00 & 0.23 ±0.00 & 0.23 ±0.00 \\
  - humanities & acc  & 0.25 ±0.01 & 0.24 ±0.01 & 0.25 ±0.01 & 0.24 ±0.01 & 0.24 ±0.01 & 0.25 ±0.01 & 0.24 ±0.01 & 0.25 ±0.01 \\
  - other & acc  & 0.24 ±0.01 & 0.24 ±0.01 & 0.24 ±0.01 & 0.24 ±0.01 & 0.24 ±0.01 & 0.27 ±0.01 & 0.25 ±0.01 & 0.24 ±0.01 \\
  - social sciences & acc  & 0.22 ±0.01 & 0.22 ±0.01 & 0.22 ±0.01 & 0.22 ±0.01 & 0.22 ±0.01 & 0.23 ±0.01 & 0.22 ±0.01 & 0.22 ±0.01 \\
  - stem & acc  & 0.22 ±0.01 & 0.22 ±0.01 & 0.23 ±0.01 & 0.22 ±0.01 & 0.21 ±0.01 & 0.24 ±0.01 & 0.22 ±0.01 & 0.22 ±0.01\\
  
\bottomrule
\end{tabularx}

\vspace{1.5em}

\begin{tabularx}{\linewidth}{@{}ll*{8}{>{\centering\arraybackslash}X}@{}}
\toprule
\textbf{Task} & \textbf{Metric} & \textbf{GPT2-LG} & \textbf{Baseline} & \textbf{Unlearn Baseline} & \textbf{F-MarI} & \textbf{KL-GA} & \textbf{GA} & \textbf{GD} & \textbf{DPO}\\
\midrule
WikiText
& bits/byte & 0.841 & 0.925 & 0.905 & 0.917 & 0.929 & 26.895 & 0.961 & 0.939 \\
& byte-pplx & 1.792 & 1.898 & 1.873 & 1.888 & 1.904  &  1.248e+08  &   1.946  & 1.917 \\
& word-pplx & 22.612 & 30.797 & 28.662 & 29.886 & 31.309 & 1.972e+43   & 35.214 & 32.428\\
\bottomrule
\end{tabularx}
\caption{Comprehensive evaluation results across multiple benchmarks for the GPT2-LG baselines. WikiText is based on perplexity, so lower is better. The higher score is better for all other tests: ARC, PIQA, HellaSwag, and MMLU.}
\label{tab:GPT_comprehensive_eval}
\end{table}

\begin{table}[hbtp]
\centering
\setlength{\tabcolsep}{3pt}
\renewcommand{\arraystretch}{1.1}
\tiny
\begin{tabularx}{\linewidth}{@{}ll*{8}{>{\centering\arraybackslash}X}@{}}
\toprule
\textbf{Task} & \textbf{Metric} & \textbf{LLaMA-3.2-1B} & \textbf{Full Finetune} & \textbf{Unlearn Baseline} & \textbf{F-MarI} & \textbf{KL-GA} & \textbf{GA} & \textbf{GD} & \textbf{DPO}\\
\midrule
ARC-Easy & acc & 0.65 ± 0.01 & 0.59 ± 0.01 & 0.60 ± 0.01 & 0.58 ± 0.01 & 0.55 ± 0.01 & 0.42 ± 0.01 & 0.57 ± 0.01 & 0.59 ± 0.01\\
 & acc\_norm & 0.61 ± 0.01 & 0.56 ± 0.01 & 0.55 ± 0.01 & 0.56 ± 0.01 & 0.53 ± 0.01 & 0.39 ± 0.01 & 0.55 ± 0.01 & 0.56 ± 0.01\\
\midrule
ARC-Challenge & acc & 0.31 ± 0.01 & 0.30 ± 0.01 & 0.31 ± 0.01 & 0.29 ± 0.01 & 0.30 ± 0.01 & 0.27 ± 0.01 & 0.30 ± 0.01 & 0.30 ± 0.01\\
 & acc\_norm & 0.36 ± 0.01 & 0.35 ± 0.01 & 0.36 ± 0.01 & 0.34 ± 0.01 & 0.33 ± 0.01 & 0.29 ± 0.01 & 0.32 ± 0.01 & 0.32 ± 0.01\\
\midrule
PIQA & acc & 0.74 ± 0.01 & 0.71 ± 0.01 & 0.71 ± 0.01 & 0.71 ± 0.01 & 0.70 ± 0.01 & 0.63 ± 0.01 & 0.69 ± 0.01 & 0.71 ± 0.01\\
 & acc\_norm & 0.74 ± 0.01 & 0.71 ± 0.01 & 0.71 ± 0.01 & 0.69 ± 0.01 & 0.69 ± 0.01 & 0.62 ± 0.01 & 0.70 ± 0.01 & 0.72 ± 0.01\\
\midrule
Hellaswag & acc & 0.48 ± 0.00 & 0.48 ± 0.00 & 0.50 ± 0.00 & 0.47 ± 0.00 & 0.46 ± 0.00 & 0.30 ± 0.00 & 0.45 ± 0.00 & 0.47 ± 0.00\\
 & acc\_norm & 0.64 ± 0.00 & 0.63 ± 0.00 & 0.65 ± 0.00 & 0.62 ± 0.00 & 0.60 ± 0.00 & 0.38 ± 0.00 & 0.59 ± 0.00 & 0.61 ± 0.00\\
\midrule
MMLU & acc & 0.37 ± 0.00 & 0.30 ± 0.00 & 0.31 ± 0.00 & 0.28 ± 0.00 & 0.26 ± 0.00 & 0.23 ± 0.00 & 0.26 ± 0.00 & 0.28 ± 0.00\\
  - humanities & acc & 0.35 ± 0.01 & 0.30 ± 0.01 & 0.31 ± 0.01 & 0.29 ± 0.01 & 0.26 ± 0.01 & 0.24 ± 0.01 & 0.27 ± 0.01 & 0.29 ± 0.01\\
  - other & acc & 0.41 ± 0.01 & 0.31 ± 0.01 & 0.32 ± 0.01 & 0.29 ± 0.01 & 0.28 ± 0.01 & 0.24 ± 0.01 & 0.25 ± 0.01 & 0.30 ± 0.01\\
  - social sciences & acc & 0.39 ± 0.01 & 0.32 ± 0.01 & 0.32 ± 0.01 & 0.28 ± 0.01 & 0.26 ± 0.01 & 0.22 ± 0.01 & 0.24 ± 0.01 & 0.26 ± 0.01\\
  - stem & acc & 0.33 ± 0.01 & 0.26 ± 0.01 & 0.28 ± 0.01 & 0.28 ± 0.01 & 0.25 ± 0.01 & 0.21 ± 0.01 & 0.27 ± 0.01 & 0.26 ± 0.01\\
\bottomrule
\end{tabularx}
\caption{Comprehensive evaluation results across multiple benchmarks for the LLaMA-3.2-1B unlearning experiment.}
\label{tab:llama_comprehensive_eval}
\end{table}

\newpage

\section{Appendix of Section 4.4: Detection Tests}\label{A:Section_5}
\subsection{Detector Methods}
Here, we provide a more detailed introduction to the current study of copyright content detectors for LLMs so that readers better understand the numerical study in section \ref{s:detection}. The current study of copyrighted text detectors can be roughly separated into two lines of work:
\begin{itemize}
    \item \textbf{White-box methods:} Perplexity outlier and reference model perplexity outlier \cite{carlini2021extract}, domain normalized minimum k-percentage \cite{zhang2024dcpdd}, and data-set level inference \citep{maini2024dataset}.
    
    The above methods largely share the same idea of constructing a statistic (or a vector of statistics) that indicates the probability that a model has seen a given sentence or not. It bases the probability on how confidently the model predicts the true output. The idea is based on the intuition that a model that has seen the sentence during training will have high confidence when trying to complete it.

    \item \textbf{Black-box methods:} Direct regurgitation probes \citep{karamolegkou2023leak}, Name-cloze membership inference \citep{chang2023cloze}, DE-COP: multi-choice preference \citep{duarte2024decop}, and Output-consistency measures \citep{dong2024consistency}.
    
    Black-box methods, which do not have access to the model parameters and therefore the output logits or prediction distributions, often use either edit distance (a.k.a. Levenshtein distance) or some token embedding model (e.g. a small transformer) to quantify the distance or similarity between a model's output and a reference string, then generate statistics of the similarity between the two.
\end{itemize}

Black-box methods are weaker detectors than white-box methods since they do not have access to a model's internals. Since our method assumes access to the model parameter, we tested our method against the current SotA white-box method, the minimum k-percent method \citep{shi2023detecting}, to demonstrate the effectiveness of our unlearning in real-world applications.

\subsection{Multiple Detection Test Results}

Here, we provide more ablation details on the undetectability result provided in Figure \ref{fig:detector_Harry_Potter}, Section \ref{s:detection}.

\begin{figure}[h]
    \centering
    \subcaptionbox{Unlearn Baseline\label{fig:adversarial_UB}}{
        \includegraphics[width=0.45\linewidth]{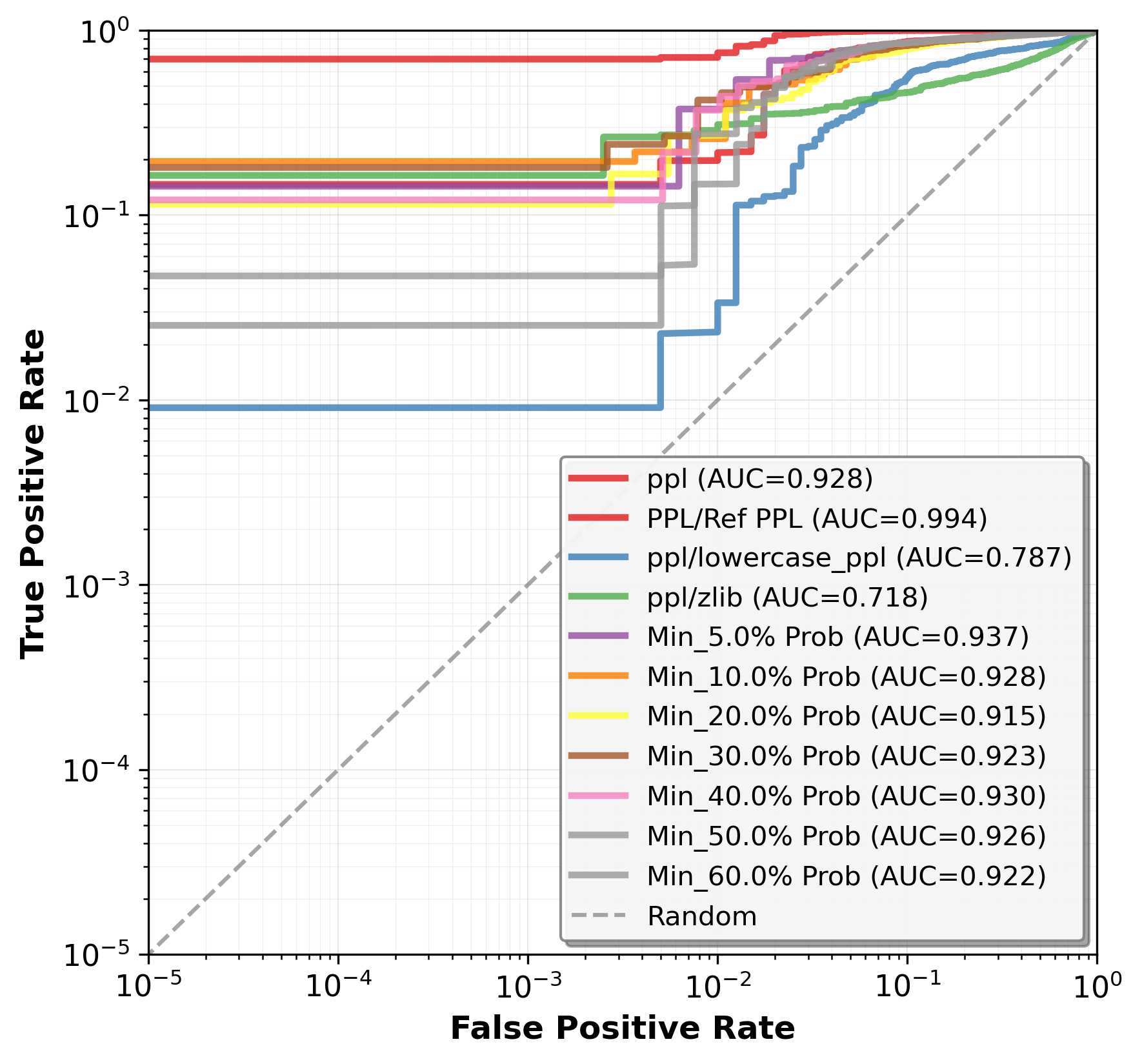}
    }
    \subcaptionbox{Baseline\label{fig:adversarial_B}}{
        \includegraphics[width=0.45\linewidth]{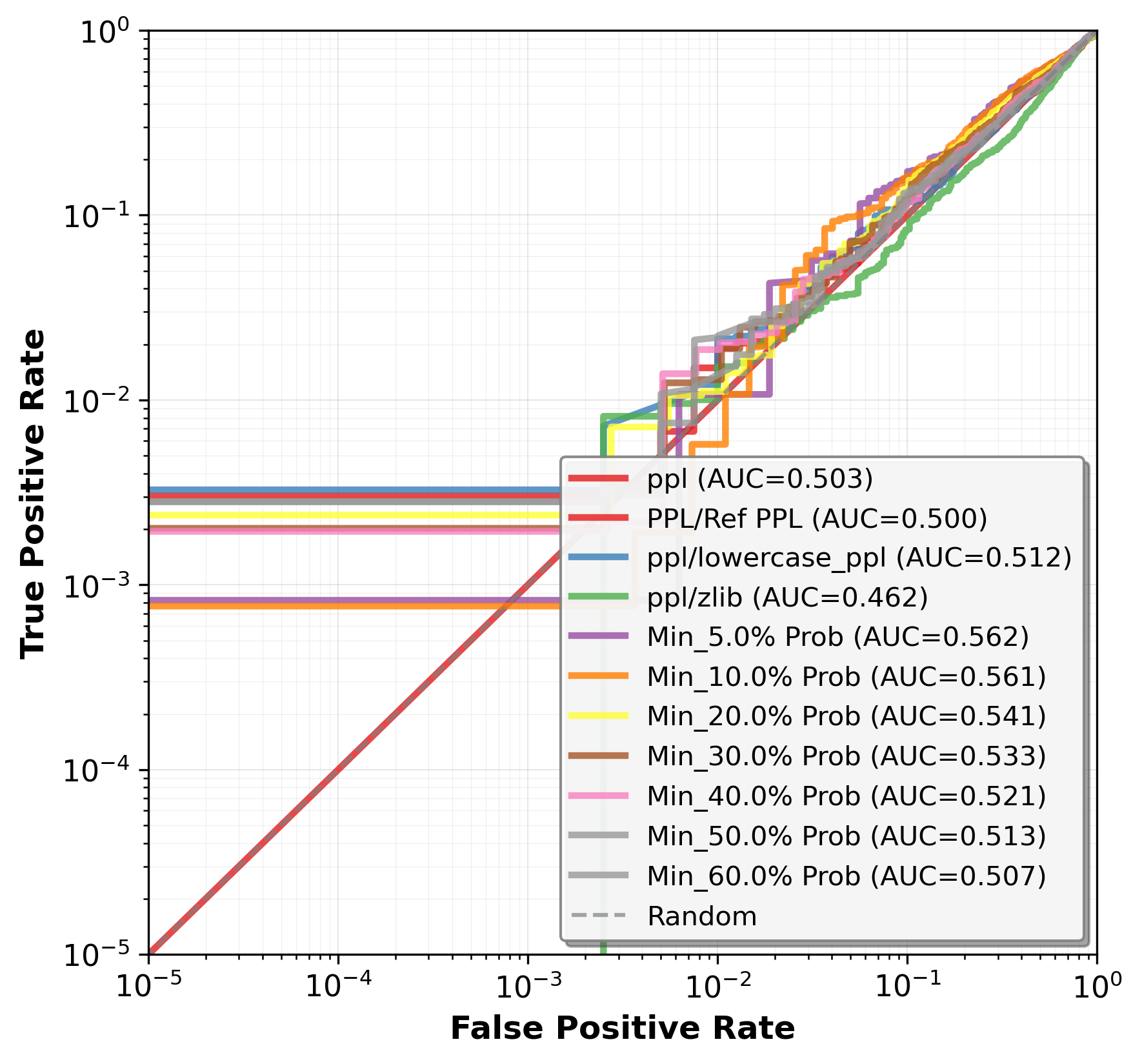}
    }\\
    \subcaptionbox{Forgetting-MarI\label{fig:adversarial_Mutir}}{
        \includegraphics[width=0.45\linewidth]{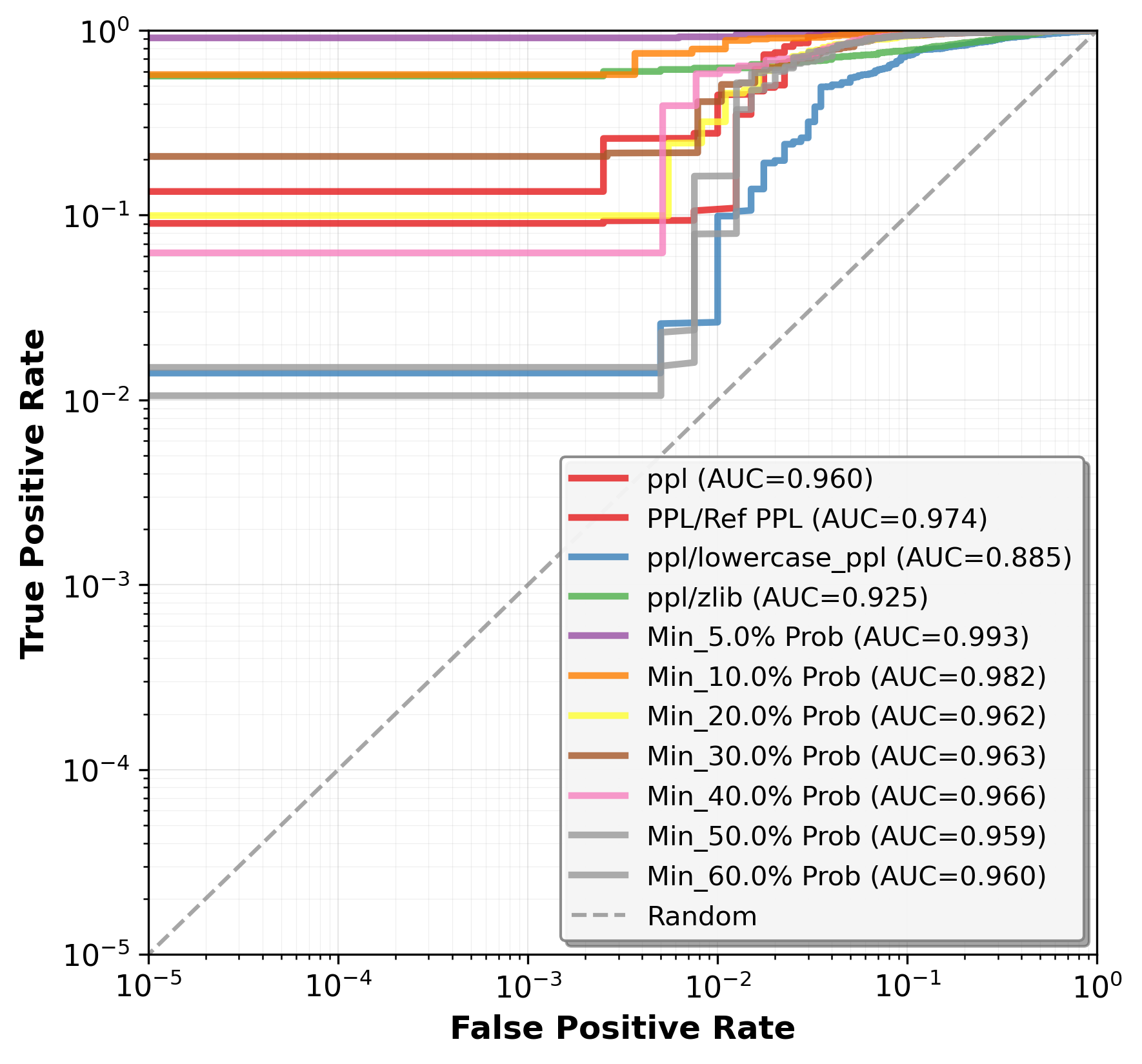}
    }
    \caption{Training data membership detection test of Forgetting-MarI against state-of-the-art detection methods using the 10/90 split Unlearning of the GPT2-LG.}
\end{figure}

\end{document}